\newtheorem{theorem}{Theorem}[section]
\newtheorem{lemma}[theorem]{Lemma}
\definecolor{darkblue}{rgb}{0, 0, 0.5}
\definecolor{lightgray}{gray}{0.9}
\definecolor{headergray}{gray}{0.85}
\definecolor{rowgray}{gray}{0.95}
\definecolor{bestgreen}{rgb}{0.0, 0.6, 0.0}
\newtcolorbox{mybox}{
  colback=red!10, % Background color is light red (10% red)
  colframe=red!75!black, % Frame color is dark red (75% red + 25% black)
  boxrule=1pt, % Frame width is 1pt
  arc=4pt, % Corner radius
  boxsep=0pt, % Space between text and frame
  left=6pt, % Space between text and left border
  right=6pt, % Space between text and right border
  top=6pt, % Space between text and top border
  bottom=6pt, % Space between text and bottom border
  width=\textwidth % Box width set to the text width
}
\newtcolorbox{mybox1}{
  colback=gray!10, % Background color is light gray (10% gray)
  colframe=gray!75!black, % Frame color is dark gray (75% gray + 25% black)
  boxrule=1pt, % Frame width is 1pt
  arc=4pt, % Corner radius
  boxsep=0pt, % Space between text and frame
  left=6pt, % Space between text and left border
  right=6pt, % Space between text and right border
  top=6pt, % Space between text and top border
  bottom=6pt, % Space between text and bottom border
  width=\textwidth % Box width set to the text width
}
\title{SAGE: Self-evolving Agents with Reflective and Memory-augmented Abilities}
\author{
  \begin{tabular}{c}
    Xuechen Liang $^{1}$\thanks{Equal contribution.}, 
    Yangfan He $^{2*}$, 
    Yinghui Xia $^{3*}$, 
    Xinyuan Song $^{4}$, \\
    Jianhui Wang $^{5}$, 
    Meiling Tao $^{6}$, 
    Li Sun $^{7}$,
    Xinhang Yuan $^{8}$,
    Jiayi Su $^{9}$, \\
    Keqin Li $^{10}$,
    Jiaqi Chen $^{10}$,
    Jinsong Yang $^{3*}$, 
    Siyuan Chen $^{10}$,
    Tianyu Shi $^{11}$\thanks{Corresponding author: \texttt{ty.shi@mail.utoronto.ca}} \\[1ex]
    $^1$ East China Jiao Tong University,
    $^2$ University of Minnesota - Twin Cities, \\
    $^3$ AutoAgents.ai,
    $^4$ Emory University, \\
    $^5$ University of Electronic Science and Technology of China, \\
    $^6$ Guangdong University of Technology, \\
    $^7$ Amazon,
    $^8$ Washington University,
    $^9$ Xiamen University,\\
    $^{10}$ Independent Researcher,
    $^{11}$ University of Bristol,
    $^{12}$ University of Toronto \\ 
    { \textbf{Project Page: \url{https://jianhuiwemi.github.io/SAGE}}} 
    % 如有第6、7、8机构，可继续添加
  \end{tabular}
}
\begin{document}
\maketitle
\begin{abstract}
Large language models (LLMs) have made significant advances in the field of natural language processing, but they still face challenges such as continuous decision-making, lack of long-term memory, and limited context windows in dynamic environments. To address these issues, this paper proposes an innovative framework—Self-evolving Agents with Reflective and Memory-augmented Abilities (\textbf{SAGE}). The SAGE framework comprises three agents: the User, the Assistant, and the Checker. By integrating iterative feedback, reflective mechanisms, and a memory optimization mechanism based on the Ebbinghaus forgetting curve, it significantly enhances the agents' capabilities in handling multi-tasking and long-span information. The agents, through self-evolution, can adaptively adjust strategies, optimize information storage and transmission, and effectively reduce cognitive load. We evaluate the performance of the SAGE framework on multiple benchmarks and long text tasks. Experimental results show that SAGE significantly improves model performance, achieving a 2.26X improvement on closed-source models and an improvement ranging from 57.7\% to 100\% on open-source models, with particularly notable effects on smaller models. 
\end{abstract}
\vspace{-0.5em}
\section{Introduction}
\vspace{-0.5em}
In recent years, large language models (LLMs) have made significant progress in the field of natural language processing, demonstrating powerful performance in tasks such as dialogue and text generation~\cite{brown2020language, he2025enhancing, he2024enhancing}. Recently, there has been growing interest in applying LLMs as autonomous agents (LLM agents), which use language not only for understanding and generation, but also for planning and acting in interactive environments~\cite{yao2022react, shinn2023reflexion, liang2024cmat, li2024falcon, zhou2024human}. However, these models still face several challenges: (1) LLM Agents need to continuously make decisions in changing environments and adapt to new situations and tasks. (2) LLM Agents lack long-term memory mechanisms, which is increasingly evident in situations requiring sustained interaction with the environment~\cite{graves2016hybrid}. The limited context window also hinders the model's ability to handle information over long time spans~\cite{rae2019compressive}.

To tackle these challenges, researchers have proposed meta-learning and multi-task learning to enhance the transferability and adaptability of LLM agents. For memory limitations, prior works like MemGPT~\cite{packer2024memgpt} use a FIFO queue to manage forgetting, while MemoryBank employs a forgetting curve based on insertion time. However, these approaches are often task-specific, lacking a general framework to systematically improve LLM agents in complex environments. Recent innovations, such as AutoGPT~\cite{yang2023autogptonlinedecisionmaking} and BabyAGI~\cite{nakajima2024babyagi}, leverage LLMs as core controllers, aiming to solve real-world challenges. Yet, multi-agent frameworks still face issues like communication overload, heavily relying on memory to maintain context. As interaction history grows, resource demands and latency increase, limiting efficient deployment in practical scenarios.

In this paper, we propose an innovative framework, Self-evolving Agents with reflective and memory-augmented abilities (SAGE). By enhancing agents' self-adjustment capabilities through reflection, they can more effectively utilize historical information and make efficient decisions when faced with complex and dynamic tasks. From the perspective of self-evolution, we introduce a memory optimization mechanism based on the Ebbinghaus forgetting curv~\cite{ebbinghaus1885gedachtnis}. This mechanism helps agents selectively retain key information, optimize information storage and transmission, reduce unnecessary cognitive load, and enhance agents' capabilities in interaction tasks with the environment. 
%Experimental results show that our approach achieves significant improvements across various benchmarks, particularly excelling in smaller models. Specifically, on AgentBench~\cite{liu2023agentbench}, the performance of powerful LLMs like GPT-3.5 and GPT-4 is enhanced by up to 2.26X. For open-source models, performance improvements range from 57.7\% to 100\%. In tasks such as multi-source question answering and code generation, our approach achieves state-of-the-art results\cite{etezadi2023state}.
Experimental results demonstrate that our approach consistently enhances the performance of both proprietary and open-source LLMs across a wide range of benchmarks. The improvements are especially notable in smaller models, where the gains are more pronounced. On tasks such as multi-source question answering and code generation, our method sets a new standard, outperforming existing techniques and achieving leading benchmarks~\cite{etezadi2023state}, including AgentBench~\cite{liu2023agentbench}.

The main contributions of our work are as follows:
\begin{itemize}[leftmargin=10pt]
    \item We propose a novel framework, SAGE, which introduces a reflection mechanism to enhance agents' self-adjustment capabilities. Without any additional training, this enables agents to utilize historical information more effectively and make better decisions when faced with complex and dynamic tasks.
    \item We introduce a memory optimization mechanism based on the Ebbinghaus forgetting curve. This helps agents selectively retain key information, reducing the issue of information overload in multi-agent systems.
    \item SAGE achieves improvements over strong baselines in multiple challenging real-world tasks and attains state-of-the-art results on benchmarks. This framework can be applied to other LLMs, with particularly strong improvements in smaller models.
\end{itemize}
\vspace{-0.5em}
\section{Related work}
\vspace{-0.5em}
\subsection{Self-Improvement of Reasoning and Decision-Making}
\vspace{-0.5em}
Deep learning has transformed multiple domains including NLP, time series analysis and computer vision~\cite{qiu2025easytime, qiu2025duet, qiu2024tfb}. A lot of research is focused on making large language models (LLMs) better at improving themselves. Some researchers are working on using carefully crafted prompts to help models learn how to get better, although this usually only works for one-off tasks. Others are tweaking how models get feedback during tasks, which helps them get better at thinking things through~\cite{huang2022large}. There's also work on using strategies like random beam searches to help models make smarter decisions and assess their own work. Most current methods rely on quick, one-off tweaks and learning strategies that need lots of resources and hands-on tech help~\cite{tian2024selfimprovement}. This paper introduces a self-reflection mechanism, showing that LLMs can keep getting better and produce higher quality work across different tasks, all without needing extra training.

% Many studies focus on enhancing the self-improvement capabilities of LLMs. Pryzant et al. optimize self-improvement through semantic prompt writing, but this method is limited to single-generation tasks. Paul et al. improve reasoning responses by fine-tuning a critic model to provide intermediate feedback within the trajectory. Xie et al. use stochastic beam search actions to execute more efficient decision search strategies, enabling agents to utilize a self-assessment component. In recent years, the self-improvement capabilities of LLMs have revealed the future direction of LLM self-evolution. Although current LLM development methods rely on single-round instructional fine-tuning and reinforcement learning-based approaches, these strategies require substantial resources and continuous human intervention. This paper introduces a self-reflection mechanism, demonstrating that LLMs can provide iterative self-refinement without additional training, thereby generating higher-quality outputs across a wide range of tasks.
\vspace{-0.5em}
\subsection{Memory Mechanism for LLM-based Agents}
\vspace{-0.5em}
In LLM-based agents, the memory module stores, processes, and retrieves task-related information, supporting knowledge accumulation, experience handling, and decision-making. To enhance the self-evolution capabilities of these agents, researchers are focused on designing and optimizing these memory modules~\cite{raffel2020exploring}.
Past research has covered various designs and implementations of memory modules. This includes integrating information from different trials to boost reasoning abilities or storing information in natural language to enhance the module's interpretability and user-friendliness~\cite{wada2019unsupervised}. 
%Additionally, some studies concentrate on developing memory read-write operations, helping agents effectively interact with their environment and complete tasks.
Despite progress, self-adjustment and memory management still need improvement to handle complex real-world problems more effectively.
%Our proposed memory optimization mechanism is designed to help agents better manage and adapt to dynamic and complex task environments.

% In LLM-based agents, the memory module is considered one of the critical components for storing, processing, and retrieving information relevant to the agent's tasks. Memory plays a crucial role in determining how the agent accumulates knowledge, processes historical experiences, and supports its actions. Therefore, to enhance the self-evolution capabilities of LLM-based agents, researchers are focused on designing and optimizing memory modules. 

% Past research has explored various designs and implementations of memory modules. For example, some researchers combine information from trials and cross-trials to construct memory modules, thereby enhancing the agent's reasoning abilities. Other researchers store memory information in natural language form to improve the module's interpretability and user-friendliness. Additionally, some studies focus on designing memory read-write operations, enabling agents to interact effectively with their environment and complete tasks. 

% Although past research has made some progress in the design and implementation of memory modules, further improvement in the self-adjustment capabilities and memory management efficiency of LLM-based agents is still needed to address complex problems in real-world applications. Therefore, our approach introduces a memory optimization mechanism, allowing agents to better cope with complex and dynamic task environments.
\begin{figure*}[!ht]
    \centering
    \includegraphics[width=\linewidth]{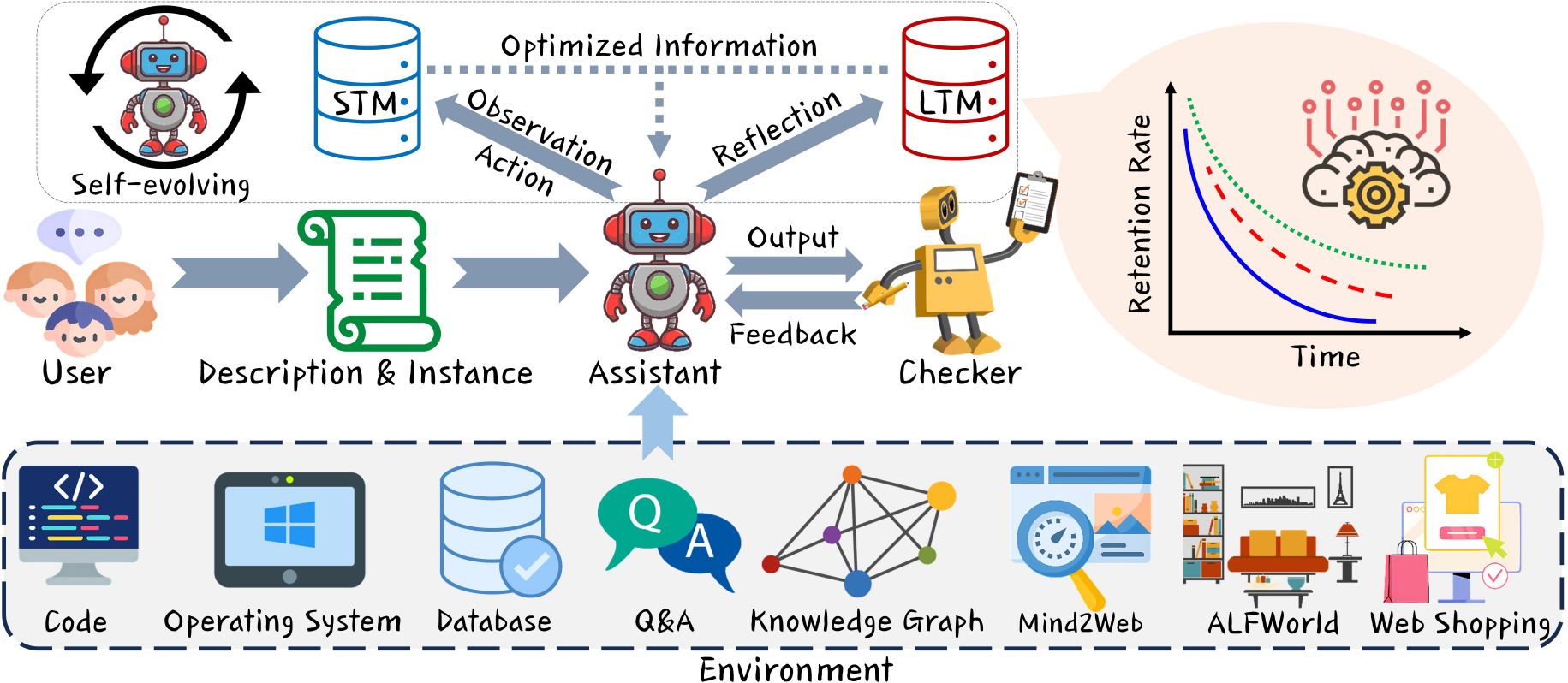}
    \caption{An illustration of the SAGE: a user provides a description and instance to the assistant with short-term (STM) and long-term (LTM) memory. The assistant performs observation, action, reflection, and output, which the checker reviews. The retention rate curve on the right illustrates memory decay over time, with a self-evolving loop guiding continued updates.}
    \label{fig:sage_framework}
\end{figure*}
\vspace{-0.5em}
\section{Method}
\vspace{-0.5em}
In this section, we present the SAGE framework, designed to improve agent performance by leveraging three core mechanisms: iterative feedback, reflection, and MemorySyntax (as shown in Figure~\ref{fig:sage_framework}). The assistant agent $A$ iteratively updates its policy $\pi_{\theta}$ based on feedback $f_t$ provided by the checker agent $C$, optimizing over successive iterations to maximize the expected reward $R$. The reflection mechanism allows $A$ to incorporate historical observations $\mathcal{O}_t$ and actions $\mathbf{a}_t$, forming a self-reflection $r_t$, which is stored in the memory $\mathcal{M}_L$ for future decision-making. Finally, MemorySyntax combines the Ebbinghaus forgetting curve with linguistic principles to manage memory decay, dynamically updating the agent's short-term memory $\mathcal{M}_S$ and long-term memory $\mathcal{M}_L$ by prioritizing information based on its retention strength $S(I_t)$, thus improving the agent's ability to retain crucial information while discarding less relevant data. The subsequent subsections detail these components.

\vspace{-0.5em}
\subsection{Iterative Feedback}
\vspace{-0.5em}
The iterative feedback mechanism in the SAGE framework enables the assistant agent $A$ to refine its policy $\pi_{\theta}$ through repeated interactions with the checker agent $C$. At each iteration $t$, the assistant receives feedback $f_t$ based on its current output $\mathbf{o}_t$, and adjusts its policy accordingly. This process continues until the checker validates the output or the iteration cap $N$ is reached, ensuring that $A$ incrementally optimizes its decisions to improve task performance over successive iterations.
\vspace{-0.5em}
\subsubsection{Initialization Phase}
\vspace{-0.5em}
\textbf{Role Assignment.}  
In the SAGE framework, three agents are introduced: the user $U$, the assistant $A$, and the checker $C$. The user, upon receiving prompt $P_U$, assumes the role of task proposer by specifying a task $\mathcal{T}_U$ and related constraints $\mathcal{C}_U$. The assistant, upon receiving prompt $P_A$, generates a sequence of actions $\mathbf{a}_t$ based on the observations $\mathcal{O}_t$ and environment $\mathcal{E}$. The checker $C$ evaluates the output $\mathbf{o}_A$ produced by the assistant, providing feedback $f_C$ based on the discrepancy between $\mathbf{o}_A$ and the expected result, updating its policy $\pi_{\theta}$ iteratively to minimize this gap.\\
\textbf{Task Assignment.}  
The task $\mathcal{T}_U$ provided by the user includes an initial task description $\mathbf{d}_U$ and an instance $\mathbf{i}_U$ that serves as the reference for correct output. This forms the input set $\mathcal{I}_A = (\mathbf{d}_U, \mathbf{i}_U)$ for the assistant to initiate its generative process. The assistant then proceeds by selecting an action $\mathbf{a}_t$ at each time step $t$, guided by $\pi_{\theta}$, with the goal of maximizing the reward $R_t$ for completing $\mathcal{T}_U$.
\begin{figure*}[!ht]
    \centering
    \includegraphics[width=\linewidth]{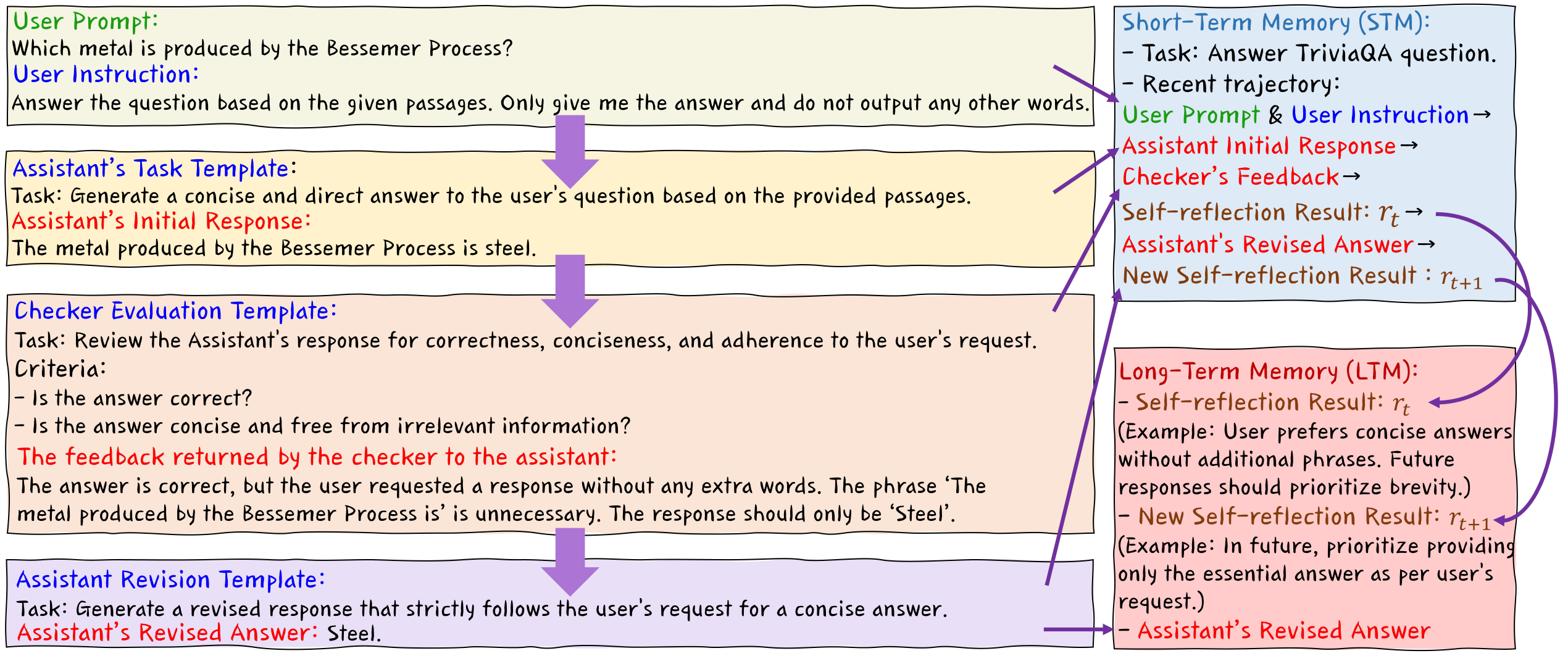}
    \caption{An example of the assistant's iterative workflow, including checker evaluation, prompt templates for feedback, and reflection processes integrating short-term and long-term memory.}
    \label{fig:assistant_refinement}
\end{figure*}

\vspace{-0.5em}
\subsubsection{Actual Interaction Phase}
\vspace{-0.5em}
Following the role assignment and task definition in the initialization phase, the assistant $A$ transitions into the actual interaction phase to generate outputs aimed at accomplishing the task $\mathcal{T}_U$. In this phase, $A$ iteratively produces outputs $\mathbf{o}_t$ at each time step $t$ based on the task description $\mathbf{d}_U$ and instance $\mathbf{i}_U$ provided in the input set $\mathcal{I}_A = (\mathbf{d}_U, \mathbf{i}_U)$. At each time step \( t \), the assistant selects an action \( \mathbf{a}_t \) by following its policy \( \pi_{\theta} \), which is conditioned on the current state \( s_t \), the reward signal \( R_t \) (the reward score for task performance), and feedback \( f_t^i \) from the checker \( C \). This decision-making process is formalized as:
\vspace{-0.3em}
\begin{equation}
    \mathbf{o}_t \sim \pi_{\theta}(\mathbf{o}_t \mid s_t, R_t, f_t^i),
\end{equation}
where \( \pi_{\theta} \) represents the assistant's policy, \( R_t \) reflects the reward signal based on task performance at time \( t \), and \( f_t^i \) is the feedback provided by the checker during the \( i \)-th iteration.

As the interaction progresses, the checker $C$ evaluates each output $\mathbf{o}_t$ generated by $A$, comparing it against the expected outcome derived from $\mathbf{i}_U$. Based on this comparison, the checker provides iterative feedback $f_t^i$ to guide $A$ in refining its actions $\mathbf{a}_t$ and outputs $\mathbf{o}_t$. The iterative refinement continues until either the checker validates the output as correct or the iteration limit $N$ is reached.

\noindent
\textbf{Theoretical optimality of iterative feedback mechanism.}
% The user ($U$), assistant ($A$), and checker ($C$) can be treated as three players in a game-theoretic model, each with its own objectives and strategies. The user provides a task description and constraints ($\mathcal{T}_U, \mathcal{C}_U$); the assistant follows a policy $\pi_{\theta}$ to generate outputs $\mathbf{o}_t$ with the aim of maximizing a task completion reward $R$; and the checker supplies feedback $f_t$ to reduce the error between $\mathbf{o}_t$ and the expected result. 
In the SAGE framework, the assistant repeatedly updates its policy through this checker feedback, enabling the outputs to be incrementally refined until either the result is validated or a specified iteration limit is reached. The assistant’s utility $R_A$ reflects task performance, and the checker’s utility $R_C$ depends on its feedback. The following theorem indicates that this iterative feedback mechanism leads to strategy stability in the sense of a Nash equilibrium~\cite{fudenberg1991game}.

\begin{theorem}[Theory for the multi-agent iterative feedback system]
\label{thm:nash_stability}
Let $\mathcal{U}, \mathcal{A}, \mathcal{C}$ denote the compact, convex strategy spaces of the user (U), assistant (A), and checker (C), respectively. Assume that the utility functions
\begin{equation}
R_U: \mathcal{U} \times \mathcal{A} \times \mathcal{C} \to \mathbb{R},\quad R_A: \mathcal{U} \times \mathcal{A} \times \mathcal{C} \to \mathbb{R},\quad \text{and}\quad R_C: \mathcal{U} \times \mathcal{A} \times \mathcal{C}\to \mathbb{R}
\end{equation}
are continuous in each player’s strategy. Then, by the Debreu-Glicksberg-Fan fixed-point theorem, there exists a Nash equilibrium 
\begin{equation}
(s_U^*, s_A^*, s_C^*) \in \mathcal{U} \times \mathcal{A} \times \mathcal{C}.
\end{equation}
Furthermore, suppose that the assistant’s policy $\pi_\theta$ is updated via policy gradient methods and that the checker’s strategy is refined through convex optimization. Then, the iterative update procedures yield sequences 
\begin{equation}
\{\pi_{\theta}^{(k)}\}_{k\geq 0} \quad \text{and} \quad \{f^{(k)}\}_{k\geq 0},
\end{equation}
which converge to a stable strategy profile $(\pi_{\theta}^*, f^*)$, and has:
\begin{equation}
R_A(\pi_{\theta}^*, f^*) \geq R_A(\pi_{\theta}, f^*), \quad
R_C(\pi_{\theta}^*, f^*) \geq R_C(\pi_{\theta}^*, f).
\end{equation}
\end{theorem}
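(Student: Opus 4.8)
The plan is to prove the two assertions separately: first the existence of a Nash equilibrium via a fixed-point argument, and then the convergence of the concrete update dynamics to an equilibrium profile. For existence, I would set up the best-response correspondences $B_i(s_{-i}) = \arg\max_{s_i \in \mathcal{S}_i} R_i(s_i, s_{-i})$ for each player $i \in \{U, A, C\}$ (with $\mathcal{S}_i$ ranging over $\mathcal{U}, \mathcal{A}, \mathcal{C}$) and verify the hypotheses of the Debreu-Glicksberg-Fan theorem. Since each $\mathcal{S}_i$ is nonempty, compact, and convex and each $R_i$ is continuous, the Weierstrass theorem gives that every $B_i(s_{-i})$ is nonempty, and Berge's maximum theorem gives that $B_i$ has closed graph and is upper hemicontinuous. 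The one hypothesis not explicitly stated in the theorem is quasi-concavity of $R_i$ in the player's own strategy; I would add this (it holds, e.g., when the utilities are concave in the own action) so that each $B_i$ is convex-valued. Applying Kakutani's fixed-point theorem to the product correspondence $B = B_U \times B_A \times B_C$ on the compact convex set $\mathcal{U} \times \mathcal{A} \times \mathcal{C}$ then yields a fixed point $(s_U^*, s_A^*, s_C^*)$, which is by construction a Nash equilibrium.

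For the convergence claim, I would restrict attention to the assistant-checker subgame, since the two displayed inequalities involve only $R_A$ and $R_C$. I would model the assistant's policy-gradient step as $\pi_\theta^{(k+1)} = \Pi_{\mathcal{A}}\bigl(\pi_\theta^{(k)} + \eta_k \nabla_\theta R_A(\pi_\theta^{(k)}, f^{(k)})\bigr)$ and the checker's convex-optimization step as a projected descent move on its loss. The key is to express the joint update as the action of a single operator $T$ on the product space and to identify a structural condition under which $T$ contracts. The cleanest route is to assume the pseudo-gradient operator $F(\pi_\theta, f) = \bigl(-\nabla_{\pi_\theta} R_A,\ -\nabla_f R_C\bigr)$ is (strongly) monotone; then the game is a monotone game, and projected gradient dynamics with an appropriate (or diminishing) step-size schedule converge to the unique solution of the associated variational inequality. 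Equivalently, if $T$ can be shown to be Lipschitz with constant strictly below one, the Banach fixed-point theorem forces $\{(\pi_\theta^{(k)}, f^{(k)})\}$ to be Cauchy and to converge to a unique fixed point $(\pi_\theta^*, f^*)$.

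Finally, I would identify the limit of the dynamics with the equilibrium from the first part and verify the two inequalities. At a fixed point of the projected updates, the first-order optimality (variational inequality) conditions hold, so $\pi_\theta^*$ maximizes $R_A(\cdot, f^*)$ over $\mathcal{A}$ and $f^*$ maximizes $R_C(\pi_\theta^*, \cdot)$ over $\mathcal{C}$, which is exactly the stated stability pair of inequalities; continuity of the utilities ensures these pass cleanly to the limit.

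I expect the convergence step to be the main obstacle. Existence is essentially a textbook application of Kakutani / Debreu-Glicksberg-Fan once quasi-concavity is granted. Convergence, however, does not follow from continuity alone: in general games, simultaneous gradient and best-response dynamics can cycle rather than converge (the classic matching-pennies phenomenon), so the theorem as stated is not true without an additional structural hypothesis. The real content of the proof is therefore to pin down and justify that monotonicity (or a contraction / potential-game structure) holds for the assistant-checker interaction, and to choose the step sizes so that the iterates genuinely converge; this is where I would spend the bulk of the effort and where the argument is most fragile.
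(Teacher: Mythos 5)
Your existence argument is essentially the paper's own: the appendix proves the theorem through a chain of lemmas that construct exactly the best-response correspondences you describe and invoke Kakutani / Debreu--Glicksberg--Fan, and, just as you anticipated, the paper has to smuggle quasi-concavity in as an explicit hypothesis of its Lemma 2 even though it never appears in the theorem statement. Your instinct to add it is right. Note also that in the paper's final lemma the checker's utility is assumed \emph{convex} in $f$ and the argmax over $f$ is then claimed to be convex-valued; that is false in general (the maximum of a convex function on a compact convex set is attained at extreme points, and the argmax set need not be convex), so your uniform quasi-concavity requirement also repairs a genuine slip in the paper's own argument.

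Where you diverge from the paper is the convergence claim, and there your diagnosis is the more honest one. The paper never actually proves that the iterates $\{\pi_{\theta}^{(k)}\}$ and $\{f^{(k)}\}$ converge: its Lemma 3 establishes only that $R_A$ is continuous in $\theta$ and that the policy-gradient update is a well-defined continuous map, its Lemma 4 establishes only that a least-squares objective is convex, and the concluding lemma then re-runs the Kakutani existence argument for the assistant--checker pair and reads the two stability inequalities off the fixed point. Existence of an equilibrium satisfying the inequalities is all that is demonstrated; convergence of the dynamics to it is asserted, not derived. Your observation that continuity alone cannot deliver convergence (simultaneous gradient or best-response dynamics can cycle, as in matching pennies) is correct, and your proposed remedy --- a monotonicity / variational-inequality or contraction hypothesis on the joint pseudo-gradient, with controlled step sizes, followed by identification of the limit via the first-order optimality conditions --- is precisely the structural ingredient needed to make the second half of the theorem true. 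In short, your plan subsumes the paper's proof on existence and supplies what the paper's proof is missing on convergence; the only caveat is that you would then be proving a strengthened-hypothesis version of the statement rather than the statement as written, which, as you correctly point out, is not provable from the stated assumptions.
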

This result demonstrates that the iterative feedback mechanism enhances the model’s strategy stability by converging to a Nash equilibrium in the three-player game. It provides a stronger justification for the three-agent system versus simpler alternatives (such as \textbf{two-agent systems}). For the detailed theoretical explanation and proof, see~\ref{sec:thm2}.

\vspace{-0.5em}
\subsubsection{Evolutionary Goals and Directions}
\vspace{-0.5em}
Leveraging the feedback $f_t^i$ obtained at each iteration $t$, the assistant $A$ formulates new evolutionary objectives:
\begin{equation}
    \mathcal{G}^{t+1} = (\mathcal{A}^{t+1}, \mathcal{D}^{t+1}), \quad 
\mathcal{D}^{t+1} = \arg\min_{\mathcal{D}_t \in \Delta} \sum_{i\in I_t} L_D\left(\mathcal{D}_t; f_t^i, \pi_{\theta}^{t}\right),
\end{equation}
where $\mathcal{A}^{t+1}$ represents the updated memory optimization mechanisms, and $\mathcal{D}^{t+1} \in \Delta$ refers to the model's self-adjustments to make the RL algorithm converge. These evolutionary objectives guide the assistant in updating its policy $\pi_{\theta}$ for the subsequent iteration. The policy update is governed by the function $\psi$, which integrates the current policy $\pi_{\theta}^t$ with the new evolutionary objectives $\mathcal{G}^{t+1}$:
\vspace{-0.1em}
\begin{equation}
\theta^{t+1} = \phi\bigl(\theta^t, \mathcal{G}^{t+1}\bigr)
= \theta^t + \alpha  \nabla_\theta \Bigl[\lambda_A L_A\bigl(\theta^t,\mathcal{A}^{t+1}\bigr) + \lambda_D L_D\bigl(\theta^t,\mathcal{D}^{t+1}\bigr)\Bigr].
\end{equation}
\vspace{-0.1em}
Here $L_A(\theta, \mathcal{A})$ and $L_D(\theta, \mathcal{D})$ are MSE loss functions corresponding to the memory-optimization and self-adjustment aspects, respectively, and $\lambda_A, \lambda_D \ge 0$ are weighting coefficients. The iterative policy refinement enables the assistant $A$ to continuously adapt its strategies based on cumulative feedback and evolving task requirements, thereby improving its overall performance in dynamic environments.

\vspace{-0.5em}
\subsection{Memory Management}
\vspace{-0.5em}
The SAGE framework implements a dual-memory system, consisting of Short-Term Memory (STM) and Long-Term Memory (LTM), to manage task-relevant information and enhance the agent's reasoning and decision-making capabilities (see Figure~\ref{fig:assistant_refinement} for a visual representation of this process).

\noindent \textbf{Short-Term Memory (STM).}  
STM is responsible for storing immediate, task-specific data with limited capacity. It updates rapidly with new observations ($\mathcal{O}_t$) and actions ($\mathbf{a}_t$), maintaining a recent trajectory history $\mathcal{T}_t = (\mathcal{O}_t, \mathbf{a}_t)$. This allows the agent to make real-time decisions and respond quickly to dynamic changes in the environment~\cite{mnih2015human}.

\noindent \textbf{Long-Term Memory (LTM).}  
LTM retains critical information and self-reflections ($r_t$) over extended periods, enabling the agent to accumulate knowledge from past interactions and apply it to future tasks. Stored as $\mathcal{M}_L = \{r_t \mid t \in T\}$, this memory mechanism allows the agent to use prior experiences to improve task performance, particularly in complex environments that require long-span information~\cite{graves2016hybrid}.

By integrating STM and LTM, the SAGE framework allows the agent to balance immediate task demands with the ability to draw from accumulated knowledge, thereby enhancing its overall decision-making efficiency.

% In this way, lessons learned from past experiences are preserved and utilized in future interactions, enhancing the agent's ability to make informed decisions over time.
\vspace{-0.5em}
\subsubsection{Reflection}
\vspace{-0.5em}
Figure~\ref{fig:2} illustrates an example of the reflection mechanism applied to a HotpotQA task~\cite{yang2018hotpotqadatasetdiverseexplainable}. The reflection mechanism equips the assistant $A$ with sparse reward signals, such as binary success/failure states, trajectory $\mathcal{T}_t$, and its stored memory $\mathcal{M}_L$. The assistant processes these inputs, deriving insights from past performance and storing self-reflections $\mathbf{r}_t$ for future decision-making. These self-reflections, richer than scalar rewards, enhance the assistant's learning capacity and are incorporated into long-term memory:
\vspace{-0.1em}
\begin{equation}
    \mathbf{r}_t = \text{ref}(\mathbf{o}_{1:t}, \mathbf{R}_{1:t}),
\end{equation}
where $\text{ref}(\cdot)$ denotes the reflection function based on the output sequence $\mathbf{o}_{1:t}$ and rewards $\mathbf{R}_{1:t}$. The derived reflection $\mathbf{r}_t$ is then added to $\mathcal{M}_L$:
\vspace{-0.1em}
\begin{equation}
    \mathcal{M}_L \leftarrow \mathcal{M}_L \cup \{\mathbf{r}_t\}.
\end{equation}
The process gradually enhances the agent's decision-making, allowing it to adapt effectively through accumulated experience.

\vspace{-0.5em}
\subsubsection{MemorySyntax}
\vspace{-0.5em}
Building upon the reflection mechanism, the MemorySyntax method integrates the Ebbinghaus forgetting curve with linguistic principles to emulate human-like memory processes within the agent's memory management system. Let $I_t$ denote the information received at time $t$, and let $R(I_t, \tau)$ represent its retention rate after a time interval $\tau$. According to the Ebbinghaus forgetting curve, the retention rate is modeled as:
\vspace{-0.1em}
\begin{equation}
    R(I_t, \tau) = e^{-\frac{\tau}{S}},
    \label{eq:forgetting_curve}
\end{equation}
\vspace{-0.2em}
where $S$ is the strength of the information, reflecting its importance and complexity.

To enhance retention, MemorySyntax applies linguistic optimization to $I_t$, producing an optimized version $I_t^*$ with increased strength $S^* > S$. The retention rate for $I_t^*$ is defined as:
\vspace{-0.1em}
\begin{equation}
    R(I_t^*, \tau) = 
    \begin{cases}
        e^{-\frac{\tau}{S^*}}, & \text{if } I_t^* \in \mathcal{M}_S, \\
        e^{-\frac{\tau}{S}},   & \text{if } I_t^* \in \mathcal{M}_L,
    \end{cases}
    \label{eq:optimized_retention}
\end{equation}
\vspace{-0.2em}
where $\mathcal{M}_S$ and $\mathcal{M}_L$ represent short-term memory and long-term memory, respectively.

The agent updates its memory state $\mathcal{M}_t$ based on the retention rate of $I_t^*$ using predefined thresholds $\theta_1$ and $\theta_2$, with $\theta_1 > \theta_2$. The memory update rule is formalized as:
\vspace{-0.1em}
\begin{equation}
    \mathcal{M}_{t+1} =
    \begin{cases}
        \mathcal{M}_t \cup \{ I_t^* \}, & \text{if } R(I_t^*, \tau) \geq \theta_1, \\
        \mathcal{M}_t \setminus \{ I_t^* \}, & \text{if } R(I_t^*, \tau) < \theta_2, \\
        \mathcal{M}_t, & \text{otherwise}.
    \end{cases}
    \label{eq:memory_update}
\end{equation}
\vspace{-0.2em}
This update rule operates under the following conditions:

\begin{itemize}[leftmargin=10pt]
    \item \textbf{Retention in Short-Term Memory ($\mathcal{M}_S$):} If $R(I_t^*, \tau) \geq \theta_1$, the information $I_t^*$ is retained in short-term memory $\mathcal{M}_S$.
    
    \item \textbf{Discarding Information:} If $R(I_t^*, \tau) < \theta_2$, the information $I_t^*$ is considered insignificant and is discarded.
    
    \item \textbf{Transfer to Long-Term Memory ($\mathcal{M}_L$):} If $\theta_2 \leq R(I_t^*, \tau) < \theta_1$, the information $I_t^*$ is transferred to long-term memory $\mathcal{M}_L$.
\end{itemize}
The above MemorySyntax method is optimal and ensures maximal information utility. For theoretical background and detailed proof, see Appendix~\ref{sec:thm1}.

By simulating human memory dynamics, MemorySyntax enables the agent to prioritize essential information, retain critical data in short-term memory $\mathcal{M}_S$, store important but less frequently used information in long-term memory $\mathcal{M}_L$, and discard irrelevant data. The mechanism addresses memory capacity limitations and enhances the agent's ability to perform complex tasks requiring efficient memory management.

\vspace{-0.5em}
\section{Experiment}
\vspace{-0.5em}
To demonstrate the capabilities and performance of SAGE in coordinating autonomous agent groups to work together on tasks, we conduct extensive quantitative experiments on benchmark tasks. We use a public benchmark, AgentBench~\cite{liu2023agentbench}, which is a multidimensional evolutionary benchmark from which we select six tasks. These tasks test the reasoning and decision-making abilities of LLMs acting as agents in multi-turn open-ended generation settings. To further assess the agents' long-context understanding, we select four widely adopted tasks related to long-text processing. These tasks reflect the agents' programming abilities(LCC\cite{guo2023longcoder}, RepoBench-P~\cite{liu2023repobench}) and reasoning abilities(HotpotQA~\cite{yang2018hotpotqadatasetdiverseexplainable}, TriviaQA~\cite{joshi2017triviaqa}).

\vspace{-0.5em}
\subsection{Evaluation on AgentBench}
\vspace{-0.5em}
\noindent\textbf{Task Description}  
AgentBench includes scenarios from CODE (Knowledge Graph, OS, DB), GAME (ALFWorld)~\cite{ALFWorld20}, and WEB (WebShop~\cite{yao2023webshop}, Mind2Web~\cite{deng2023mind2web}). For more details for the datasets and benchmarks, see Appendix \ref{sec:appendixA}.

\noindent\textbf{Baselines}  
We evaluate commercial models GPT-3.5~\cite{brown2020language} and GPT-4~\cite{openai2024gpt4technicalreport}, and open-source models Llama2~\cite{touvron2023llama}, Codellama~\cite{2024codellamaopenfoundation}, Qwen~\cite{bai2023qwen}, and ChatGLM2~\cite{glm2024chatglm}. Dialogue history exceeding the model length limit is truncated, using greedy decoding.

\noindent\textbf{Results}  
As shown in Table~\ref{tab:merged_performance_corrected}, our method significantly improves model performance on AgentBench, especially for smaller models. GPT-3.5 and GPT-4, despite already high scores, also show notable improvements with SAGE, up to 2.26x in the Database task. Llama2-7b is notably enhanced, showing the method's impact on weaker models. CodeLlama-7b and Qwen-1.8B also see substantial gains. Qwen-1.8B, after applying our method, performs close to GPT-3.5, highlighting its potential as a general agent. Llama2, previously error-prone, shows a significant reduction in basic errors through feedback and memory optimization, proving that our method not only enhances agent capabilities but also reduces fundamental errors in complex tasks.

\begin{table*}[!ht]
  \centering
  % \makebox[\textwidth][c]{%
  % \resizebox{1.2\textwidth}{!}{%
  \caption{Baseline and SAGE Framework Performance on AgentBench}
  \label{tab:merged_performance_corrected}
  \scriptsize 
  \renewcommand{\arraystretch}{1.2} 
  \hspace*{-0.6cm}  % 调整这个值控制左右偏移
  \resizebox{1.1\textwidth}{!}{  % 1.25 表示宽度略大于页面宽度，保持左右超出一致
\begin{tabular}{cccccccccccccccc}
    \toprule
    \multirow{2}{*}{\textbf{LLM Type}} & \multirow{2}{*}{\textbf{Model}} & 
    \multicolumn{2}{c}{\textbf{VER}} & 
    \multicolumn{2}{c}{\textbf{OS}} & 
    \multicolumn{2}{c}{\textbf{DB}} & 
    \multicolumn{2}{c}{\textbf{KG}} & 
    \multicolumn{2}{c}{\textbf{ALF}} & 
    \multicolumn{2}{c}{\textbf{WS}} & 
    \multicolumn{2}{c}{\textbf{M2W}} \\
    \cmidrule(lr){3-4} \cmidrule(lr){5-6} \cmidrule(lr){7-8} 
    \cmidrule(lr){9-10} \cmidrule(lr){11-12} \cmidrule(lr){13-14} \cmidrule(lr){15-16}
    & & \textbf{Base} & \textbf{  SAGE} & 
    \textbf{Base} & \textbf{SAGE} & 
    \textbf{Base} & \textbf{SAGE} & 
    \textbf{Base} & \textbf{SAGE} & 
    \textbf{Base} & \textbf{SAGE} & 
    \textbf{Base} & \textbf{SAGE} & 
    \textbf{Base} & \textbf{SAGE} \\
    \midrule
    \multirow{2}{*}{API} 
    & GPT-4           & 42.4 & \textcolor{bestgreen}{\textbf{\ \ 49.7} }& 32.0 & \textcolor{bestgreen}{\textbf{39.8}} & 57.4 & \textcolor{bestgreen}{\textbf{63.1}} & 78.0 & \textcolor{bestgreen}{\textbf{82.0}} & 67.1 & \textcolor{bestgreen}{\textbf{67.8}} & 27.0 & 32.0 & 27.0 & \textcolor{bestgreen}{\textbf{32.0}} \\ 
    & GPT-3.5         & 31.6 & \textcolor{bestgreen}{\textbf{\ 38.3}} & 15.7 & \textcolor{bestgreen}{\textbf{35.6}} & 25.9 & \textcolor{bestgreen}{\textbf{37.6}} & 17.0 & \textcolor{bestgreen}{\textbf{23.0}} & 64.1 & \textcolor{bestgreen}{\textbf{72.1}} & 16.0 & \textcolor{bestgreen}{\textbf{28.0}} & 16.0 & \textcolor{bestgreen}{\textbf{28.0}} \\ 
    \midrule
\multirow{5}{*}{OSS} 
    & Llama2-7B Chat        & 0.0 & \textcolor{bestgreen}{\textbf{8.4\  }}  & 0.0 & \textcolor{bestgreen}{\textbf{10.2}} & 0.0 & \textcolor{bestgreen}{\textbf{25.0}} & 0.0 & \textcolor{bestgreen}{\textbf{5.0\ \ }}  & 4.4 & \textcolor{bestgreen}{\textbf{10.4}} & 0.0 & \textcolor{bestgreen}{\textbf{15.0}} & 0.0 & \textcolor{bestgreen}{\textbf{15.0}} \\ 
    & CodeLlama-7B Instruct & 5.7 & \textcolor{bestgreen}{\textbf{18.4}} & 2.6 & \textcolor{bestgreen}{\textbf{19.2}} & 0.0 & \textcolor{bestgreen}{\textbf{27.0}} & 0.0 & \textcolor{bestgreen}{\textbf{12.5}} & 16.3 & \textcolor{bestgreen}{\textbf{40.2}} & 0.0 & \textcolor{bestgreen}{\textbf{15.0}} & 15.0 & 15.0 \\ 
    & Qwen1.8B Chat         & 2.7 & \textcolor{bestgreen}{\textbf{18.7}} & 1.4 & \textcolor{bestgreen}{\textbf{15.1}} & 6.8 & \textcolor{bestgreen}{\textbf{45.3}} & 0.0 & \textcolor{bestgreen}{\textbf{10.5}} & 6.6 & \textcolor{bestgreen}{\textbf{11.4}} & 0.6 & \textcolor{bestgreen}{\textbf{13.6}} & 13.6 & 13.6 \\ 
    & Qwen-7B Chat          & 5.6 & \textcolor{bestgreen}{\textbf{22.2}} & 4.8 & \textcolor{bestgreen}{\textbf{18.0}} & 0.0 & \textcolor{bestgreen}{\textbf{48.0}} & 34.0 & \textcolor{bestgreen}{\textbf{38.5}} & 0.0 & \textcolor{bestgreen}{\textbf{13.6}} & 0.0 & \textcolor{bestgreen}{\textbf{15.0}} & 15.0 & 15.0 \\ 
    & ChatGLM2-6B v1.1      & 0.0 & \textcolor{bestgreen}{\textbf{15.2}} & 0.0 & \textcolor{bestgreen}{\textbf{16.3}} & 0.0 & \textcolor{bestgreen}{\textbf{17.0}} & 0.0 & \textcolor{bestgreen}{\textbf{5.0\ \ }}  & 0.3 & \textcolor{bestgreen}{\textbf{10.3}} & 4.9 & \textcolor{bestgreen}{\textbf{14.9}} & 14.9 & 14.9 \\    
    \bottomrule
  \end{tabular}
  }
  \hspace*{-0.6cm}  % 与上面值一致，保持对称
\end{table*}

\vspace{-0.5em}
\subsection{Complex Problem-Solving Tasks Evaluation}
\vspace{-0.5em}
We evaluated SAGE against baseline models on three tasks: long-form QA~\cite{akash2023longformquestionansweringiterative}, multi-turn dialog~\cite{cui-etal-2020-mutual}, and sequential task completion~\cite{stephens2012simultaneity}. As shown in Table~\ref{tab:5}, SAGE outperforms all baselines with significant gains, such as a 20.8\% increase in answer accuracy for GPT-3.5 on HotpotQA~\cite{yang2018hotpotqadatasetdiverseexplainable} and a 17.3\% improvement in task completion for Mistral-7b on ALFWorld~\cite{ALFWorld20}. Across all tasks, SAGE notably enhances answer accuracy, dialog coherence, and step completion.

\begin{table*}[!ht]
\centering
\caption{Evaluation of SAGE and Baseline Models on Three Different Tasks}
\renewcommand{\arraystretch}{1.2}
\setlength{\tabcolsep}{4pt}
\small 
  \hspace*{-1.0cm} 
    \resizebox{1.1\textwidth}{!}{  % 1.25 表示宽度略大于页面宽度，保持左右超出一致
\begin{tabular}{p{3.5cm}cccc}
\toprule
\multirow{2}*{\textbf{Agent}} & \textbf{Task Completion} & \textbf{Answer Accuracy} & \textbf{Dialog Coherence} & \textbf{Step Completion} \\
& \textbf{Time (min)} & \textbf{(QA)(\%)} & \textbf{(\%)} & \textbf{Accuracy (\%)} \\
\midrule
\rowcolor{rowgray!20}
GPT-3.5 (Baseline)       & Long-form QA (HotpotQA)               & 54.1\%                    & 48.5\%                     & 62.7\% \\
GPT-4 (Baseline)         & Long-form QA (HotpotQA)               & 61.2\%                    & 53.8\%                     & 68.2\% \\
Llama2-7b (Baseline)     & Multi-turn Dialog (MultiWOZ)          & 55.9\%                    & 50.1\%                     & 64.8\% \\
Codellama-13b (Baseline) & Multi-turn Dialog (MultiWOZ)          & 58.4\%                    & 52.3\%                     & 66.7\% \\
Mistral-7b (Baseline)    & Sequential Task (ALFWorld)            & 56.5\%                    & 51.5\%                     & 65.1\% \\
\midrule
\rowcolor{rowgray!20}
\textbf{SAGE-GPT-3.5}             & Long-form QA (HotpotQA)               & \textbf{\textcolor{bestgreen}{74.9\% (+20.8\%)}} & \textbf{\textcolor{bestgreen}{68.3\% (+19.8\%)}} & \textbf{\textcolor{bestgreen}{80.6\% (+17.9\%)}} \\
\textbf{SAGE-GPT-4}               & Long-form QA (HotpotQA)               & \textbf{\textcolor{bestgreen}{78.4\% (+17.2\%)}} & \textbf{\textcolor{bestgreen}{73.4\% (+19.6\%)}} & \textbf{\textcolor{bestgreen}{83.9\% (+15.7\%)}} \\
\textbf{SAGE-Llama2-7b}           & Multi-turn Dialog (MultiWOZ)          & \textbf{\textcolor{bestgreen}{72.2\% (+16.1\%)}} & \textbf{\textcolor{bestgreen}{67.9\% (+17.8\%)}} & \textbf{\textcolor{bestgreen}{78.5\% (+13.7\%)}} \\
\textbf{SAGE-Codellama-13b}       & Multi-turn Dialog (MultiWOZ)          & \textbf{\textcolor{bestgreen}{74.7\% (+16.3\%)}} & \textbf{\textcolor{bestgreen}{71.2\% (+18.9\%)}} & \textbf{\textcolor{bestgreen}{81.2\% (+14.5\%)}} \\
\textbf{SAGE-Mistral-7b}          & Sequential Task (ALFWorld)            & \textbf{\textcolor{bestgreen}{73.8\% (+17.3\%)}} & \textbf{\textcolor{bestgreen}{70.5\% (+19.0\%)}} & \textbf{\textcolor{bestgreen}{79.9\% (+14.8\%)}} \\
\bottomrule
\end{tabular}}
  \hspace*{-1.0cm} 
\label{tab:5}
\end{table*}

\vspace{-0.5em}
\subsection{Evaluation of Long-Context Tasks}
\vspace{-0.5em}
We evaluated the agent's code generation and reasoning on four long-text tasks: \textbf{LCC Dataset}~\cite{mohler-etal-2016-introducing} focuses on predicting the next line of code from a few initial lines, with Precision, Recall, and F1 as metrics. \textbf{RepoBench-P}~\cite{liu2024repobench} tests retrieval of relevant code snippets from cross-file and within-file contexts to predict the next line, also evaluated with Precision, Recall, and F1. \textbf{HotPotQA}~\cite{yang-etal-2018-hotpotqa}, a Wikipedia-based dataset with 113k question-answer pairs, challenges the agent to reason across multiple documents, evaluated by answer F1. \textbf{TriviaQA}~\cite{joshi-etal-2017-triviaqa} is a reading comprehension dataset with question-answer pairs and evidence paragraphs (filtered to over 1,000 words), also using answer F1 for evaluation.

We compared two self-refinement methods: \textbf{Beam Search}~\cite{kool2019stochasticbeamsthemgumbeltopk}, which integrates self-assessment through stochastic beam search, and \textbf{Reflexion}~\cite{shinn2023reflexion}, which uses past trial experience in a verbal form.

\vspace{-0.5em}
\subsection{Evaluation Results:}
\vspace{-0.5em}
\textbf{Code Completion Task:}  
On the LCC dataset (Table~\ref{tab:3}), SAGE shows a slight improvement in F1 score (79.29) compared to Beam Search and Reflexion. Its memory mechanisms help refine code predictions, but the performance difference is not substantial in simpler tasks like code completion.

\textbf{Reasoning Tasks:}  
SAGE significantly outperforms Reflexion and Beam Search on HotPotQA and TriviaQA, with F1 scores of 22.06 and 22.76 (Table~\ref{tab:3}). SAGE's ability to effectively integrate multi-document information through reflection leads to better reasoning accuracy, while Reflexion and Beam Search face challenges in handling complex reasoning tasks.

\begin{table*}[!ht]
\centering
\caption{Comparison of Performance Across Different Methods}
  \resizebox{0.9\textwidth}{!}{%
\begin{tabular}{lccccccccc}  
\toprule
\multirow{2}*{\textbf{Models} }& \multicolumn{3}{c}{\textbf{LCC}} & \multicolumn{3}{c}{\textbf{RepoBench-P}} & \textbf{HotpotQA} & \textbf{TriviaQA} \\
\cmidrule(r){2-4} \cmidrule(r){5-7} \cmidrule(r){8-8} \cmidrule(r){9-9}
 & \textbf{Precision} & \textbf{Recall} & \textbf{F1} & \textbf{Precision} & \textbf{Recall} & \textbf{F1} & \textbf{F1} & \textbf{F1} \\
\midrule
Reflexion & 77.72 & \cellcolor{blue!15}81.00 & 79.28 & 78.73 & 81.86 & 80.25 & 11.26 & 11.23 \\
Beam search & \cellcolor{blue!15}78.98 & 79.32 & 79.12 & 78.75 & 81.02 & 79.87 & 10.26 & 12.13 \\
SAGE & 78.76 & 79.88 & \cellcolor{blue!15}79.29 & \cellcolor{blue!15}79.27 & \cellcolor{blue!15}83.28 & \cellcolor{blue!15}81.22 & \cellcolor{red!15}22.06 & \cellcolor{red!15}{22.76} \\
\bottomrule
\end{tabular}}
\label{tab:3}
\end{table*}

\vspace{-0.5em}
\subsection{Evaluation of RAG Agents}
\vspace{-0.5em}
Table \ref{tab:rag_evaluation} compares classical lexical retrieval (RAG with BM25)~\cite{INR-019}, dense passage retrieval (RAG with DPR)~\cite{reichman2024densepassageretrievalretrieving}, a commercial retrieval solution (RAG with OpenAI Retrieval)~\cite{openai_cookbook}, the TART~\cite{eisenschlos2022tart} sequence-to-sequence retrieval model, and the FiD (Fusion-in-Decoder) method~\cite{izacard2021fid}, all tested on multi-document and open-domain QA tasks (HotpotQA, Natural Questions~\cite{kwiatkowski-etal-2019-natural}, and TriviaQA). RAG with BM25 relies on term-based matching, while RAG with DPR uses learned dense embeddings. TART adopts a transformer-based approach to produce relevant contexts, and FiD fuses multiple retrieved passages through an encoder-decoder design. In contrast, ChatGPT-4 (SAGE) employs a structured reasoning workflow for retrieval and generation, which leads to steady accuracy improvements of 3.6\% to 4.7\% and cuts memory consumption nearly 50\% on some tasks, all without increasing latency.

\begin{table*}[!ht]
    \centering
    \caption{Evaluation of different RAG Agents on Different Tasks and Datasets}
    \label{tab:rag_evaluation}
    \renewcommand{\arraystretch}{1.3}
    \setlength{\tabcolsep}{6pt} 
    \small 
      \resizebox{0.9\textwidth}{!}{%
    \begin{tabularx}{\textwidth}{Xccc}
        \toprule
        \textbf{Agent} & \textbf{Accuracy (QA) (\%)} & \textbf{Latency (ms)} & \textbf{Memory Usage (MB)} \\ 
        \midrule
        \multicolumn{4}{c}{\textbf{Task 1: Multi-Document QA (HotpotQA)}} \\
        RAG (BM25)                & 60.8 & 121 & 613 \\
        RAG (DPR)                 & 66.3 & 129 & 542 \\
        RAG (OpenAI Retrieval)    & 67.4 & 108 & 494 \\
        TART                      & 63.2 & 144 & 477 \\
        FiD (Fusion-in-Decoder)    & 70.1 & 153 & 456 \\
        \rowcolor{rowgray!10}
        \textbf{ChatGPT-4 - Sage} & \textbf{\textcolor{bestgreen}{74.8 (+4.7)}} & 128 & \textbf{\textcolor{bestgreen}{231 (-50\%)}} \\ 
        \midrule
        \multicolumn{4}{c}{\textbf{Task 2: Document Retrieval for Contextual Answering (Natural Questions)}} \\
        RAG (BM25)                & 59.9 & 125 & 605 \\
        RAG (DPR)                 & 65.5 & 131 & 561 \\
        RAG (OpenAI Retrieval)    & 66.8 & 113 & 484 \\
        TART                      & 62.4 & 146 & 455 \\
        FiD (Fusion-in-Decoder)    & 69.8 & 156 & 443 \\
        \rowcolor{rowgray!10}
        \textbf{ChatGPT-4 - Sage} & \textbf{\textcolor{bestgreen}{73.6 (+3.8)}} & 131 & \textbf{\textcolor{bestgreen}{227 (-49\%)}} \\ 
        \midrule
        \multicolumn{4}{c}{\textbf{Task 3: Open-Domain QA with Multiple Contexts (TriviaQA)}} \\
        RAG (BM25)                & 62.1 & 124 & 615 \\
        RAG (DPR)                 & 67.8 & 129 & 530 \\
        RAG (OpenAI Retrieval)    & 68.9 & 117 & 494 \\
        TART                      & 64.7 & 148 & 462 \\
        FiD (Fusion-in-Decoder)    & 71.9 & 155 & 456 \\
        \rowcolor{rowgray!10}
        \textbf{ChatGPT-4 - Sage} & \textbf{\textcolor{bestgreen}{75.5 (+3.6)}} & 134 & \textbf{\textcolor{bestgreen}{243 (-47\%)}} \\ 
        \bottomrule
    \end{tabularx}}
\end{table*}

\vspace{-0.5em}
\subsection{Error analysis}
\vspace{-0.5em}
As shown in Figure~\ref{fig:3}, the SAGE framework significantly enhances agent performance across tasks, especially in the WS task for AgentBench, due to its iterative feedback mechanism, which refines outputs through continuous assistant-checker interaction. In OS and DB tasks, Context Limit Exceeded and invalid format errors are nearly eliminated, with a notable reduction in invalid action errors, attributed to the reflection mechanism that helps the assistant learn and reduce logical mistakes.

\begin{figure*}[!ht]
    \centering
    \includegraphics[width=1\linewidth]{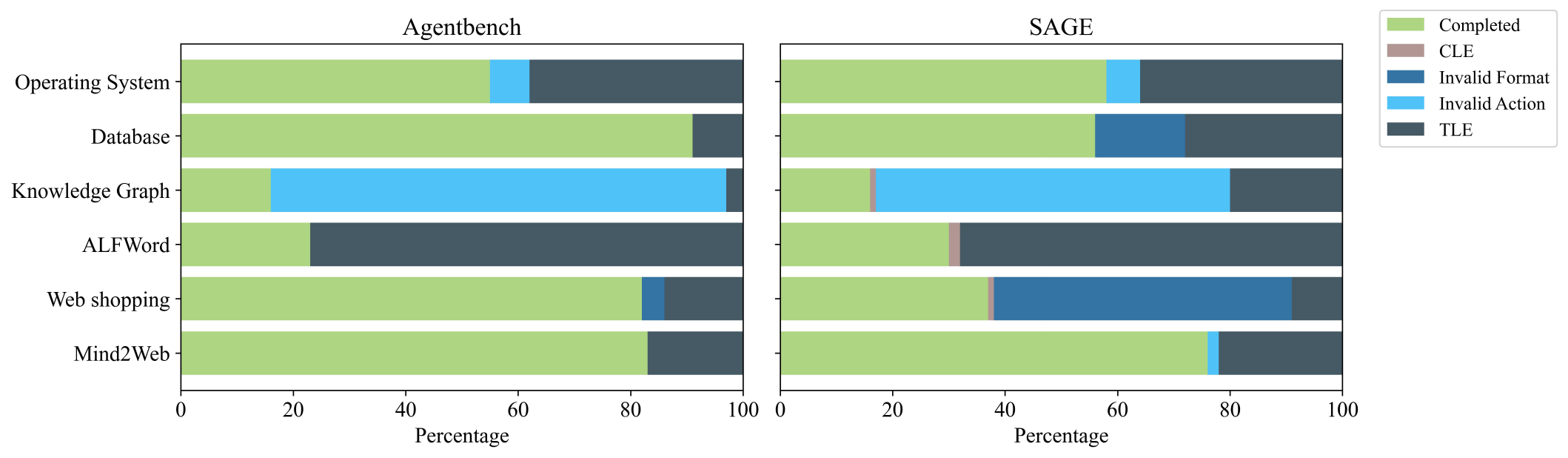}
    \caption{Execution results across six tasks (CLE: Context Limit Exceeded, TLE: Task Limit Exceeded). Task limits are the main cause of incomplete tasks, highlighting LLM agents' limitations under time constraints.}
    \label{fig:3}
    \vspace{-10pt}
\end{figure*}

\vspace{-0.5em}
\subsection{Ablation Study}
\vspace{-0.5em}
We conducted ablation experiments on Qwen-1.8B and CodeLlama-7B to evaluate memory optimization (Table \ref{tab:4}). Without memory optimization, both models perform weakly, especially Qwen-1.8B, which improves from 6.8 to 48.0 in KG and from 0.0 to 10.5 in ALF after optimization. Similarly, CodeLlama-7B shows substantial gains, particularly in DB (2.7 to 41.3) and WS (14.3 to 58.7). Overall, CodeLlama-7B performs better than Qwen-1.8B, highlighting the stronger adaptability of models with more parameters in handling complex tasks.

\begin{table}[!ht]
    \centering
    \caption{Ablation study for memory optimization on the task of AgentBench}
    \resizebox{0.7\linewidth}{!}{
    \begin{tabular}{ccccccc}
        \toprule
        \textbf{Models} & \textbf{OS} & \textbf{DB} & \textbf{KG} & \textbf{ALF} & \textbf{WS} & \textbf{M2W} \\
        \midrule
        \textbf{Qwen-1.8B (w/o memo)} & 10.4 & 22.6 & {6.8 \ } & {0.0\ \ }  & {26.6 \ }& {5.0\ \ }  \\
        \textbf{Qwen-1.8B (w memo)} & \textcolor{bestgreen}{\textbf{18.7}} & \textcolor{bestgreen}{\textbf{28.3}} & \textcolor{bestgreen}{\textbf{45.3} }& \textcolor{bestgreen}{\textbf{10.5} }& \textcolor{bestgreen}{\textbf{31.4} }& \textcolor{bestgreen}{\textbf{25.1} }\\
        \midrule
        \textbf{Codellama-7B (w/o memo)} & {9.7\ } & {2.7\ } & {0.0 \ \ }& {0.0\ \ } & {14.3\ } & {5.0\ \ } \\
        \textbf{Codellama-7B (w memo)} & \textcolor{bestgreen}{\textbf{23.4}} & \textcolor{bestgreen}{\textbf{41.3}} & \textcolor{bestgreen}{\textbf{48.0} }& \textcolor{bestgreen}{\textbf{12.5} }& \textcolor{bestgreen}{\textbf{58.7} }& \textcolor{bestgreen}{\textbf{15.0} }\\
        \bottomrule
    \end{tabular}
    }
    \label{tab:4}
    \vspace{-10pt}
\end{table}

\vspace{-0.5em}
\section{Conclusion}
\vspace{-1em}
In this paper, we propose the SAGE framework, which enhances agents' self-adjustment and memory management in complex tasks through reflective mechanisms and memory optimization. Experimental results show significant performance improvements across benchmarks, especially in smaller models. In the AgentBench test, SAGE boosts the performance of strong baselines like GPT-3.5 and GPT-4, while also significantly improving open-source models. It effectively reduces basic errors and logical mistakes, particularly enabling smaller models to handle complex tasks.

\bibliographystyle{colm2025_conference}
% \section{Limitations}
% Despite the significant improvements achieved by the SAGE framework, several limitations remain. The iterative feedback process can become computationally intensive, leading to increased latency and higher resource consumption, which may not be feasible for real-time applications or systems with limited computational power. Additionally, managing short-term and long-term memory introduces complexity and overhead, requiring further optimization. The framework's performance heavily relies on the quality and accuracy of the checker; incorrect feedback can lead to suboptimal solutions. While the SAGE framework has shown effectiveness on specific benchmark tasks, its generalization across a broader range of tasks needs validation. It may struggle with tasks involving high levels of ambiguity or requiring deep contextual understanding. The MemorySyntax method's thresholds for retaining or forgetting information are critical and can affect performance, necessitating careful tuning. The use of sparse reward signals may not provide sufficient granularity for learning, requiring more nuanced feedback mechanisms. Finally, the evaluation of the SAGE framework has primarily been conducted on specific datasets, introducing a risk of evaluation bias. Broader, more diverse evaluations are needed to ensure robustness and general applicability. Addressing these limitations will require further research and development to refine the framework, optimize its components, and validate its performance across diverse and complex tasks.
\bibliography{custom}

\begin{thebibliography}{53}
\providecommand{\natexlab}[1]{#1}
\providecommand{\url}[1]{\texttt{#1}}
\expandafter\ifx\csname urlstyle\endcsname\relax
  \providecommand{\doi}[1]{doi: #1}\else
  \providecommand{\doi}{doi: \begingroup \urlstyle{rm}\Url}\fi

\bibitem[Akash et~al.(2023)Akash, Roy, Popa, and Chang]{akash2023longformquestionansweringiterative}
Pritom~Saha Akash, Kashob~Kumar Roy, Lucian Popa, and Kevin Chen-Chuan Chang.
\newblock Long-form question answering: An iterative planning-retrieval-generation approach, 2023.
\newblock URL \url{https://arxiv.org/abs/2311.09383}.

\bibitem[Bai et~al.(2023)Bai, Bai, Chu, Cui, Dang, Deng, Fan, Ge, Han, Huang, Hui, Ji, Li, Lin, Lin, Liu, Liu, Lu, Lu, Ma, Men, Ren, Ren, Tan, Tan, Tu, Wang, Wang, Wang, Wu, Xu, Xu, Yang, Yang, Yang, Yang, Yao, Yu, Yuan, Yuan, Zhang, Zhang, Zhang, Zhang, Zhou, Zhou, Zhou, and Zhu]{bai2023qwen}
Jinze Bai, Shuai Bai, Yunfei Chu, Zeyu Cui, Kai Dang, Xiaodong Deng, Yang Fan, Wenbin Ge, Yu~Han, Fei Huang, Binyuan Hui, Luo Ji, Mei Li, Junyang Lin, Runji Lin, Dayiheng Liu, Gao Liu, Chengqiang Lu, Keming Lu, Jianxin Ma, Rui Men, Xingzhang Ren, Xuancheng Ren, Chuanqi Tan, Sinan Tan, Jianhong Tu, Peng Wang, Shijie Wang, Wei Wang, Shengguang Wu, Benfeng Xu, Jin Xu, An~Yang, Hao Yang, Jian Yang, Shusheng Yang, Yang Yao, Bowen Yu, Hongyi Yuan, Zheng Yuan, Jianwei Zhang, Xingxuan Zhang, Yichang Zhang, Zhenru Zhang, Chang Zhou, Jingren Zhou, Xiaohuan Zhou, and Tianhang Zhu.
\newblock Qwen technical report, 2023.

\bibitem[Brown et~al.(2020)Brown, Mann, Ryder, Subbiah, Kaplan, Dhariwal, Neelakantan, Shyam, Sastry, Askell, Agarwal, Herbert-Voss, Krueger, Henighan, Child, Ramesh, Ziegler, Wu, Winter, Hesse, Chen, Sigler, Litwin, Gray, Chess, Clark, Berner, McCandlish, Radford, Sutskever, and Amodei]{brown2020language}
Tom~B. Brown, Benjamin Mann, Nick Ryder, Melanie Subbiah, Jared Kaplan, Prafulla Dhariwal, Arvind Neelakantan, Pranav Shyam, Girish Sastry, Amanda Askell, Sandhini Agarwal, Ariel Herbert-Voss, Gretchen Krueger, Tom Henighan, Rewon Child, Aditya Ramesh, Daniel~M. Ziegler, Jeffrey Wu, Clemens Winter, Christopher Hesse, Mark Chen, Eric Sigler, Mateusz Litwin, Scott Gray, Benjamin Chess, Jack Clark, Christopher Berner, Sam McCandlish, Alec Radford, Ilya Sutskever, and Dario Amodei.
\newblock Language models are few-shot learners, 2020.

\bibitem[Cover \& Thomas(2006)Cover and Thomas]{10.5555/1146355}
Thomas~M. Cover and Joy~A. Thomas.
\newblock \emph{Elements of Information Theory (Wiley Series in Telecommunications and Signal Processing)}.
\newblock Wiley-Interscience, USA, 2006.
\newblock ISBN 0471241954.

\bibitem[Cui et~al.(2020)Cui, Wu, Liu, Zhang, and Zhou]{cui-etal-2020-mutual}
Leyang Cui, Yu~Wu, Shujie Liu, Yue Zhang, and Ming Zhou.
\newblock {M}u{T}ual: A dataset for multi-turn dialogue reasoning.
\newblock In Dan Jurafsky, Joyce Chai, Natalie Schluter, and Joel Tetreault (eds.), \emph{Proceedings of the 58th Annual Meeting of the Association for Computational Linguistics}, pp.\  1406--1416, Online, July 2020. Association for Computational Linguistics.
\newblock \doi{10.18653/v1/2020.acl-main.130}.
\newblock URL \url{https://aclanthology.org/2020.acl-main.130/}.

\bibitem[Deng et~al.(2023)Deng, Gu, Zheng, Chen, Stevens, Wang, Sun, and Su]{deng2023mind2web}
Xiang Deng, Yu~Gu, Boyuan Zheng, Shijie Chen, Samuel Stevens, Boshi Wang, Huan Sun, and Yu~Su.
\newblock Mind2web: Towards a generalist agent for the web, 2023.

\bibitem[Ebbinghaus(1885)]{ebbinghaus1885gedachtnis}
Hermann Ebbinghaus.
\newblock \emph{{\"U}ber das ged{\"a}chtnis: untersuchungen zur experimentellen psychologie}.
\newblock Duncker \& Humblot, 1885.

\bibitem[Eisenschlos et~al.(2022)Eisenschlos, Zemlyanskiy, Phang, Dhingra, and Cohen]{eisenschlos2022tart}
Julian Eisenschlos, Yury Zemlyanskiy, Jason Phang, Bhuwan Dhingra, and William~W. Cohen.
\newblock {TART}: Retrieval-augmented generation for task-oriented dialog.
\newblock In \emph{Proceedings of the 60th Annual Meeting of the Association for Computational Linguistics (ACL)}, pp.\  1866--1881, 2022.
\newblock URL \url{https://aclanthology.org/2022.acl-long.131}.

\bibitem[Etezadi \& Shamsfard(2023)Etezadi and Shamsfard]{etezadi2023state}
Romina Etezadi and Mehrnoush Shamsfard.
\newblock The state of the art in open domain complex question answering: a survey.
\newblock \emph{Applied Intelligence}, 53\penalty0 (4):\penalty0 4124--4144, 2023.

\bibitem[Fudenberg \& Tirole(1991)Fudenberg and Tirole]{fudenberg1991game}
Drew Fudenberg and Jean Tirole.
\newblock \emph{Game Theory}.
\newblock MIT Press, Cambridge, MA, 1991.

\bibitem[GLM et~al.(2024)GLM, Zeng, Xu, Wang, Zhang, Yin, Rojas, Feng, Zhao, Lai, Yu, Wang, Sun, Zhang, Cheng, Gui, Tang, Zhang, Li, Zhao, Wu, Zhong, Liu, Huang, Zhang, Zheng, Lu, Duan, Zhang, Cao, Yang, Tam, Zhao, Liu, Xia, Zhang, Gu, Lv, Liu, Liu, Yang, Song, Zhang, An, Xu, Niu, Yang, Li, Bai, Dong, Qi, Wang, Yang, Du, Hou, and Wang]{glm2024chatglm}
Team GLM, Aohan Zeng, Bin Xu, Bowen Wang, Chenhui Zhang, Da~Yin, Diego Rojas, Guanyu Feng, Hanlin Zhao, Hanyu Lai, Hao Yu, Hongning Wang, Jiadai Sun, Jiajie Zhang, Jiale Cheng, Jiayi Gui, Jie Tang, Jing Zhang, Juanzi Li, Lei Zhao, Lindong Wu, Lucen Zhong, Mingdao Liu, Minlie Huang, Peng Zhang, Qinkai Zheng, Rui Lu, Shuaiqi Duan, Shudan Zhang, Shulin Cao, Shuxun Yang, Weng~Lam Tam, Wenyi Zhao, Xiao Liu, Xiao Xia, Xiaohan Zhang, Xiaotao Gu, Xin Lv, Xinghan Liu, Xinyi Liu, Xinyue Yang, Xixuan Song, Xunkai Zhang, Yifan An, Yifan Xu, Yilin Niu, Yuantao Yang, Yueyan Li, Yushi Bai, Yuxiao Dong, Zehan Qi, Zhaoyu Wang, Zhen Yang, Zhengxiao Du, Zhenyu Hou, and Zihan Wang.
\newblock Chatglm: A family of large language models from glm-130b to glm-4 all tools, 2024.

\bibitem[Graves et~al.(2016)Graves, Wayne, Reynolds, Harley, Danihelka, Grabska-Barwi{\'n}ska, Colmenarejo, Grefenstette, Ramalho, Agapiou, et~al.]{graves2016hybrid}
Alex Graves, Greg Wayne, Malcolm Reynolds, Tim Harley, Ivo Danihelka, Agnieszka Grabska-Barwi{\'n}ska, Sergio~G{\'o}mez Colmenarejo, Edward Grefenstette, Tiago Ramalho, John Agapiou, et~al.
\newblock Hybrid computing using a neural network with dynamic external memory.
\newblock \emph{Nature}, 538\penalty0 (7626):\penalty0 471--476, 2016.

\bibitem[Guo et~al.(2023)Guo, Xu, Duan, Yin, and McAuley]{guo2023longcoder}
Daya Guo, Canwen Xu, Nan Duan, Jian Yin, and Julian McAuley.
\newblock Longcoder: A long-range pre-trained language model for code completion, 2023.

\bibitem[He et~al.(2024)He, Wang, Li, Wang, Sun, Yin, Zhang, and Wang]{he2024enhancing}
Yangfan He, Jianhui Wang, Kun Li, Yijin Wang, Li~Sun, Jun Yin, Miao Zhang, and Xueqian Wang.
\newblock Enhancing intent understanding for ambiguous prompt: A human-machine co-adaption strategy.
\newblock \emph{Available at SSRN 5119629}, 2024.

\bibitem[He et~al.(2025)He, Wang, Li, Wang, Sun, Yin, Zhang, and Wang]{he2025enhancing}
Yangfan He, Jianhui Wang, Kun Li, Yijin Wang, Li~Sun, Jun Yin, Miao Zhang, and Xueqian Wang.
\newblock Enhancing intent understanding for ambiguous prompts through human-machine co-adaptation.
\newblock \emph{arXiv preprint arXiv:2501.15167}, 2025.

\bibitem[Huang et~al.(2022)Huang, Gu, Hou, Wu, Wang, Yu, and Han]{huang2022large}
Jiaxin Huang, Shixiang~Shane Gu, Le~Hou, Yuexin Wu, Xuezhi Wang, Hongkun Yu, and Jiawei Han.
\newblock Large language models can self-improve, 2022.

\bibitem[Izacard \& Grave(2021)Izacard and Grave]{izacard2021fid}
Gautier Izacard and Edouard Grave.
\newblock Leveraging passage retrieval with generative models for open domain question answering.
\newblock In \emph{Proceedings of the 16th Conference of the European Chapter of the Association for Computational Linguistics: Main Volume (EACL)}, pp.\  874--880, 2021.
\newblock URL \url{https://aclanthology.org/2021.eacl-main.24}.

\bibitem[Joshi et~al.(2017{\natexlab{a}})Joshi, Choi, Weld, and Zettlemoyer]{joshi-etal-2017-triviaqa}
Mandar Joshi, Eunsol Choi, Daniel Weld, and Luke Zettlemoyer.
\newblock {T}rivia{QA}: A large scale distantly supervised challenge dataset for reading comprehension.
\newblock In Regina Barzilay and Min-Yen Kan (eds.), \emph{Proceedings of the 55th Annual Meeting of the Association for Computational Linguistics (Volume 1: Long Papers)}, pp.\  1601--1611, Vancouver, Canada, July 2017{\natexlab{a}}. Association for Computational Linguistics.
\newblock \doi{10.18653/v1/P17-1147}.
\newblock URL \url{https://aclanthology.org/P17-1147}.

\bibitem[Joshi et~al.(2017{\natexlab{b}})Joshi, Choi, Weld, and Zettlemoyer]{joshi2017triviaqa}
Mandar Joshi, Eunsol Choi, Daniel~S. Weld, and Luke Zettlemoyer.
\newblock Triviaqa: A large scale distantly supervised challenge dataset for reading comprehension, 2017{\natexlab{b}}.

\bibitem[Kool et~al.(2019)Kool, van Hoof, and Welling]{kool2019stochasticbeamsthemgumbeltopk}
Wouter Kool, Herke van Hoof, and Max Welling.
\newblock Stochastic beams and where to find them: The gumbel-top-k trick for sampling sequences without replacement, 2019.
\newblock URL \url{https://arxiv.org/abs/1903.06059}.

\bibitem[Kwiatkowski et~al.(2019)Kwiatkowski, Palomaki, Redfield, Collins, Parikh, Alberti, Epstein, Polosukhin, Devlin, Lee, Toutanova, Jones, Kelcey, Chang, Dai, Uszkoreit, Le, and Petrov]{kwiatkowski-etal-2019-natural}
Tom Kwiatkowski, Jennimaria Palomaki, Olivia Redfield, Michael Collins, Ankur Parikh, Chris Alberti, Danielle Epstein, Illia Polosukhin, Jacob Devlin, Kenton Lee, Kristina Toutanova, Llion Jones, Matthew Kelcey, Ming-Wei Chang, Andrew~M. Dai, Jakob Uszkoreit, Quoc Le, and Slav Petrov.
\newblock Natural questions: A benchmark for question answering research.
\newblock \emph{Transactions of the Association for Computational Linguistics}, 7:\penalty0 452--466, 2019.
\newblock \doi{10.1162/tacl_a_00276}.
\newblock URL \url{https://aclanthology.org/Q19-1026/}.

\bibitem[Li et~al.(2024)Li, He, He, Wang, Shi, Lei, Li, and Chen]{li2024falcon}
Zeyuan Li, Yangfan He, Lewei He, Jianhui Wang, Tianyu Shi, Bin Lei, Yuchen Li, and Qiuwu Chen.
\newblock Falcon: Feedback-driven adaptive long/short-term memory reinforced coding optimization system.
\newblock \emph{arXiv preprint arXiv:2410.21349}, 2024.

\bibitem[Liang et~al.(2024)Liang, Tao, Xia, Shi, Wang, and Yang]{liang2024cmat}
Xuechen Liang, Meiling Tao, Yinghui Xia, Tianyu Shi, Jun Wang, and JingSong Yang.
\newblock Cmat: A multi-agent collaboration tuning framework for enhancing small language models.
\newblock \emph{arXiv preprint arXiv:2404.01663}, 2024.

\bibitem[Liu et~al.(2023{\natexlab{a}})Liu, Xu, and McAuley]{liu2023repobench}
Tianyang Liu, Canwen Xu, and Julian McAuley.
\newblock Repobench: Benchmarking repository-level code auto-completion systems, 2023{\natexlab{a}}.

\bibitem[Liu et~al.(2024)Liu, Xu, and McAuley]{liu2024repobench}
Tianyang Liu, Canwen Xu, and Julian McAuley.
\newblock Repobench: Benchmarking repository-level code auto-completion systems.
\newblock In \emph{The Twelfth International Conference on Learning Representations}, 2024.
\newblock URL \url{https://openreview.net/forum?id=pPjZIOuQuF}.

\bibitem[Liu et~al.(2023{\natexlab{b}})Liu, Yu, Zhang, Xu, Lei, Lai, Gu, Ding, Men, Yang, Zhang, Deng, Zeng, Du, Zhang, Shen, Zhang, Su, Sun, Huang, Dong, and Tang]{liu2023agentbench}
Xiao Liu, Hao Yu, Hanchen Zhang, Yifan Xu, Xuanyu Lei, Hanyu Lai, Yu~Gu, Hangliang Ding, Kaiwen Men, Kejuan Yang, Shudan Zhang, Xiang Deng, Aohan Zeng, Zhengxiao Du, Chenhui Zhang, Sheng Shen, Tianjun Zhang, Yu~Su, Huan Sun, Minlie Huang, Yuxiao Dong, and Jie Tang.
\newblock Agentbench: Evaluating llms as agents.
\newblock \emph{arXiv preprint arXiv: 2308.03688}, 2023{\natexlab{b}}.

\bibitem[Mnih et~al.(2015)Mnih, Kavukcuoglu, Silver, Rusu, Veness, Bellemare, Graves, Riedmiller, Fidjeland, Ostrovski, et~al.]{mnih2015human}
Volodymyr Mnih, Koray Kavukcuoglu, David Silver, Andrei~A Rusu, Joel Veness, Marc~G Bellemare, Alex Graves, Martin Riedmiller, Andreas~K Fidjeland, Georg Ostrovski, et~al.
\newblock Human-level control through deep reinforcement learning.
\newblock \emph{nature}, 518\penalty0 (7540):\penalty0 529--533, 2015.

\bibitem[Mohler et~al.(2016)Mohler, Brunson, Rink, and Tomlinson]{mohler-etal-2016-introducing}
Michael Mohler, Mary Brunson, Bryan Rink, and Marc Tomlinson.
\newblock Introducing the {LCC} metaphor datasets.
\newblock In Nicoletta Calzolari, Khalid Choukri, Thierry Declerck, Sara Goggi, Marko Grobelnik, Bente Maegaard, Joseph Mariani, Helene Mazo, Asuncion Moreno, Jan Odijk, and Stelios Piperidis (eds.), \emph{Proceedings of the Tenth International Conference on Language Resources and Evaluation ({LREC}'16)}, pp.\  4221--4227, Portoro{\v{z}}, Slovenia, May 2016. European Language Resources Association (ELRA).
\newblock URL \url{https://aclanthology.org/L16-1668}.

\bibitem[Nakajima(2024)]{nakajima2024babyagi}
Yohei Nakajima.
\newblock Babyagi: An ai agent that can achieve goals and execute tasks, 2024.

\bibitem[{OpenAI}(2023)]{openai_cookbook}
{OpenAI}.
\newblock Openai cookbook.
\newblock \url{https://github.com/openai/openai-cookbook}, 2023.
\newblock Accessed: 2025-03-27.

\bibitem[OpenAI et~al.(2024)OpenAI, Achiam, Adler, Agarwal, Ahmad, Akkaya, Aleman, Almeida, Altenschmidt, Altman, Anadkat, Avila, Babuschkin, Balaji, Balcom, Baltescu, Bao, Bavarian, Belgum, Bello, Berdine, Bernadett-Shapiro, Berner, Bogdonoff, Boiko, Boyd, Brakman, Brockman, Brooks, Brundage, Button, Cai, Campbell, Cann, Carey, Carlson, Carmichael, Chan, Chang, Chantzis, Chen, Chen, Chen, Chen, Chen, Chess, Cho, Chu, Chung, Cummings, Currier, Dai, Decareaux, Degry, Deutsch, Deville, Dhar, Dohan, Dowling, Dunning, Ecoffet, Eleti, Eloundou, Farhi, Fedus, Felix, Fishman, Forte, Fulford, Gao, Georges, Gibson, Goel, Gogineni, Goh, Gontijo-Lopes, Gordon, Grafstein, Gray, Greene, Gross, Gu, Guo, Hallacy, Han, Harris, He, Heaton, Heidecke, Hesse, Hickey, Hickey, Hoeschele, Houghton, Hsu, Hu, Hu, Huizinga, Jain, Jain, Jang, Jiang, Jiang, Jin, Jin, Jomoto, Jonn, Jun, Kaftan, Łukasz Kaiser, Kamali, Kanitscheider, Keskar, Khan, Kilpatrick, Kim, Kim, Kim, Kirchner, Kiros, Knight, Kokotajlo, Łukasz Kondraciuk, Kondrich,
  Konstantinidis, Kosic, Krueger, Kuo, Lampe, Lan, Lee, Leike, Leung, Levy, Li, Lim, Lin, Lin, Litwin, Lopez, Lowe, Lue, Makanju, Malfacini, Manning, Markov, Markovski, Martin, Mayer, Mayne, McGrew, McKinney, McLeavey, McMillan, McNeil, Medina, Mehta, Menick, Metz, Mishchenko, Mishkin, Monaco, Morikawa, Mossing, Mu, Murati, Murk, Mély, Nair, Nakano, Nayak, Neelakantan, Ngo, Noh, Ouyang, O'Keefe, Pachocki, Paino, Palermo, Pantuliano, Parascandolo, Parish, Parparita, Passos, Pavlov, Peng, Perelman, de~Avila Belbute~Peres, Petrov, de~Oliveira~Pinto, Michael, Pokorny, Pokrass, Pong, Powell, Power, Power, Proehl, Puri, Radford, Rae, Ramesh, Raymond, Real, Rimbach, Ross, Rotsted, Roussez, Ryder, Saltarelli, Sanders, Santurkar, Sastry, Schmidt, Schnurr, Schulman, Selsam, Sheppard, Sherbakov, Shieh, Shoker, Shyam, Sidor, Sigler, Simens, Sitkin, Slama, Sohl, Sokolowsky, Song, Staudacher, Such, Summers, Sutskever, Tang, Tezak, Thompson, Tillet, Tootoonchian, Tseng, Tuggle, Turley, Tworek, Uribe, Vallone, Vijayvergiya,
  Voss, Wainwright, Wang, Wang, Wang, Ward, Wei, Weinmann, Welihinda, Welinder, Weng, Weng, Wiethoff, Willner, Winter, Wolrich, Wong, Workman, Wu, Wu, Wu, Xiao, Xu, Yoo, Yu, Yuan, Zaremba, Zellers, Zhang, Zhang, Zhao, Zheng, Zhuang, Zhuk, and Zoph]{openai2024gpt4technicalreport}
OpenAI, Josh Achiam, Steven Adler, Sandhini Agarwal, Lama Ahmad, Ilge Akkaya, Florencia~Leoni Aleman, Diogo Almeida, Janko Altenschmidt, Sam Altman, Shyamal Anadkat, Red Avila, Igor Babuschkin, Suchir Balaji, Valerie Balcom, Paul Baltescu, Haiming Bao, Mohammad Bavarian, Jeff Belgum, Irwan Bello, Jake Berdine, Gabriel Bernadett-Shapiro, Christopher Berner, Lenny Bogdonoff, Oleg Boiko, Madelaine Boyd, Anna-Luisa Brakman, Greg Brockman, Tim Brooks, Miles Brundage, Kevin Button, Trevor Cai, Rosie Campbell, Andrew Cann, Brittany Carey, Chelsea Carlson, Rory Carmichael, Brooke Chan, Che Chang, Fotis Chantzis, Derek Chen, Sully Chen, Ruby Chen, Jason Chen, Mark Chen, Ben Chess, Chester Cho, Casey Chu, Hyung~Won Chung, Dave Cummings, Jeremiah Currier, Yunxing Dai, Cory Decareaux, Thomas Degry, Noah Deutsch, Damien Deville, Arka Dhar, David Dohan, Steve Dowling, Sheila Dunning, Adrien Ecoffet, Atty Eleti, Tyna Eloundou, David Farhi, Liam Fedus, Niko Felix, Simón~Posada Fishman, Juston Forte, Isabella Fulford, Leo
  Gao, Elie Georges, Christian Gibson, Vik Goel, Tarun Gogineni, Gabriel Goh, Rapha Gontijo-Lopes, Jonathan Gordon, Morgan Grafstein, Scott Gray, Ryan Greene, Joshua Gross, Shixiang~Shane Gu, Yufei Guo, Chris Hallacy, Jesse Han, Jeff Harris, Yuchen He, Mike Heaton, Johannes Heidecke, Chris Hesse, Alan Hickey, Wade Hickey, Peter Hoeschele, Brandon Houghton, Kenny Hsu, Shengli Hu, Xin Hu, Joost Huizinga, Shantanu Jain, Shawn Jain, Joanne Jang, Angela Jiang, Roger Jiang, Haozhun Jin, Denny Jin, Shino Jomoto, Billie Jonn, Heewoo Jun, Tomer Kaftan, Łukasz Kaiser, Ali Kamali, Ingmar Kanitscheider, Nitish~Shirish Keskar, Tabarak Khan, Logan Kilpatrick, Jong~Wook Kim, Christina Kim, Yongjik Kim, Jan~Hendrik Kirchner, Jamie Kiros, Matt Knight, Daniel Kokotajlo, Łukasz Kondraciuk, Andrew Kondrich, Aris Konstantinidis, Kyle Kosic, Gretchen Krueger, Vishal Kuo, Michael Lampe, Ikai Lan, Teddy Lee, Jan Leike, Jade Leung, Daniel Levy, Chak~Ming Li, Rachel Lim, Molly Lin, Stephanie Lin, Mateusz Litwin, Theresa Lopez, Ryan
  Lowe, Patricia Lue, Anna Makanju, Kim Malfacini, Sam Manning, Todor Markov, Yaniv Markovski, Bianca Martin, Katie Mayer, Andrew Mayne, Bob McGrew, Scott~Mayer McKinney, Christine McLeavey, Paul McMillan, Jake McNeil, David Medina, Aalok Mehta, Jacob Menick, Luke Metz, Andrey Mishchenko, Pamela Mishkin, Vinnie Monaco, Evan Morikawa, Daniel Mossing, Tong Mu, Mira Murati, Oleg Murk, David Mély, Ashvin Nair, Reiichiro Nakano, Rajeev Nayak, Arvind Neelakantan, Richard Ngo, Hyeonwoo Noh, Long Ouyang, Cullen O'Keefe, Jakub Pachocki, Alex Paino, Joe Palermo, Ashley Pantuliano, Giambattista Parascandolo, Joel Parish, Emy Parparita, Alex Passos, Mikhail Pavlov, Andrew Peng, Adam Perelman, Filipe de~Avila Belbute~Peres, Michael Petrov, Henrique~Ponde de~Oliveira~Pinto, Michael, Pokorny, Michelle Pokrass, Vitchyr~H. Pong, Tolly Powell, Alethea Power, Boris Power, Elizabeth Proehl, Raul Puri, Alec Radford, Jack Rae, Aditya Ramesh, Cameron Raymond, Francis Real, Kendra Rimbach, Carl Ross, Bob Rotsted, Henri Roussez,
  Nick Ryder, Mario Saltarelli, Ted Sanders, Shibani Santurkar, Girish Sastry, Heather Schmidt, David Schnurr, John Schulman, Daniel Selsam, Kyla Sheppard, Toki Sherbakov, Jessica Shieh, Sarah Shoker, Pranav Shyam, Szymon Sidor, Eric Sigler, Maddie Simens, Jordan Sitkin, Katarina Slama, Ian Sohl, Benjamin Sokolowsky, Yang Song, Natalie Staudacher, Felipe~Petroski Such, Natalie Summers, Ilya Sutskever, Jie Tang, Nikolas Tezak, Madeleine~B. Thompson, Phil Tillet, Amin Tootoonchian, Elizabeth Tseng, Preston Tuggle, Nick Turley, Jerry Tworek, Juan Felipe~Cerón Uribe, Andrea Vallone, Arun Vijayvergiya, Chelsea Voss, Carroll Wainwright, Justin~Jay Wang, Alvin Wang, Ben Wang, Jonathan Ward, Jason Wei, CJ~Weinmann, Akila Welihinda, Peter Welinder, Jiayi Weng, Lilian Weng, Matt Wiethoff, Dave Willner, Clemens Winter, Samuel Wolrich, Hannah Wong, Lauren Workman, Sherwin Wu, Jeff Wu, Michael Wu, Kai Xiao, Tao Xu, Sarah Yoo, Kevin Yu, Qiming Yuan, Wojciech Zaremba, Rowan Zellers, Chong Zhang, Marvin Zhang, Shengjia
  Zhao, Tianhao Zheng, Juntang Zhuang, William Zhuk, and Barret Zoph.
\newblock Gpt-4 technical report, 2024.
\newblock URL \url{https://arxiv.org/abs/2303.08774}.

\bibitem[Packer et~al.(2024)Packer, Wooders, Lin, Fang, Patil, Stoica, and Gonzalez]{packer2024memgpt}
Charles Packer, Sarah Wooders, Kevin Lin, Vivian Fang, Shishir~G. Patil, Ion Stoica, and Joseph~E. Gonzalez.
\newblock Memgpt: Towards llms as operating systems, 2024.

\bibitem[Qiu et~al.(2024)Qiu, Hu, Zhou, Wu, Du, Zhang, Guo, Zhou, Jensen, Sheng, and Yang]{qiu2024tfb}
Xiangfei Qiu, Jilin Hu, Lekui Zhou, Xingjian Wu, Junyang Du, Buang Zhang, Chenjuan Guo, Aoying Zhou, Christian~S. Jensen, Zhenli Sheng, and Bin Yang.
\newblock Tfb: Towards comprehensive and fair benchmarking of time series forecasting methods.
\newblock In \emph{Proc. {VLDB} Endow.}, pp.\  2363--2377, 2024.

\bibitem[Qiu et~al.(2025{\natexlab{a}})Qiu, Li, Pang, Pan, Wu, Yang, Hu, Shu, Lu, Yang, Guo, Zhou, Jensen, and Yang]{qiu2025easytime}
Xiangfei Qiu, Xiuwen Li, Ruiyang Pang, Zhicheng Pan, Xingjian Wu, Liu Yang, Jilin Hu, Yang Shu, Xuesong Lu, Chengcheng Yang, Chenjuan Guo, Aoying Zhou, Christian~S. Jensen, and Bin Yang.
\newblock Easytime: Time series forecasting made easy.
\newblock In \emph{ICDE}, 2025{\natexlab{a}}.

\bibitem[Qiu et~al.(2025{\natexlab{b}})Qiu, Wu, Lin, Guo, Hu, and Yang]{qiu2025duet}
Xiangfei Qiu, Xingjian Wu, Yan Lin, Chenjuan Guo, Jilin Hu, and Bin Yang.
\newblock Duet: Dual clustering enhanced multivariate time series forecasting.
\newblock In \emph{SIGKDD}, pp.\  1185--1196, 2025{\natexlab{b}}.

\bibitem[Rae et~al.(2019)Rae, Potapenko, Jayakumar, and Lillicrap]{rae2019compressive}
Jack~W Rae, Anna Potapenko, Siddhant~M Jayakumar, and Timothy~P Lillicrap.
\newblock Compressive transformers for long-range sequence modelling.
\newblock \emph{arXiv preprint arXiv:1911.05507}, 2019.

\bibitem[Raffel et~al.(2020)Raffel, Shazeer, Roberts, Lee, Narang, Matena, Zhou, Li, and Liu]{raffel2020exploring}
Colin Raffel, Noam Shazeer, Adam Roberts, Katherine Lee, Sharan Narang, Michael Matena, Yanqi Zhou, Wei Li, and Peter~J Liu.
\newblock Exploring the limits of transfer learning with a unified text-to-text transformer.
\newblock \emph{Journal of machine learning research}, 21\penalty0 (140):\penalty0 1--67, 2020.

\bibitem[Reichman \& Heck(2024)Reichman and Heck]{reichman2024densepassageretrievalretrieving}
Benjamin Reichman and Larry Heck.
\newblock Dense passage retrieval: Is it retrieving?, 2024.
\newblock URL \url{https://arxiv.org/abs/2402.11035}.

\bibitem[Robertson \& Zaragoza(2009)Robertson and Zaragoza]{INR-019}
Stephen Robertson and Hugo Zaragoza.
\newblock The probabilistic relevance framework: Bm25 and beyond.
\newblock \emph{Foundations and Trends® in Information Retrieval}, 3\penalty0 (4):\penalty0 333--389, 2009.
\newblock ISSN 1554-0669.
\newblock \doi{10.1561/1500000019}.
\newblock URL \url{http://dx.doi.org/10.1561/1500000019}.

\bibitem[Rozière et~al.(2024)Rozière, Gehring, Gloeckle, Sootla, Gat, Tan, Adi, Liu, Sauvestre, Remez, Rapin, Kozhevnikov, Evtimov, Bitton, Bhatt, Ferrer, Grattafiori, Xiong, Défossez, Copet, Azhar, Touvron, Martin, Usunier, Scialom, and Synnaeve]{2024codellamaopenfoundation}
Baptiste Rozière, Jonas Gehring, Fabian Gloeckle, Sten Sootla, Itai Gat, Xiaoqing~Ellen Tan, Yossi Adi, Jingyu Liu, Romain Sauvestre, Tal Remez, Jérémy Rapin, Artyom Kozhevnikov, Ivan Evtimov, Joanna Bitton, Manish Bhatt, Cristian~Canton Ferrer, Aaron Grattafiori, Wenhan Xiong, Alexandre Défossez, Jade Copet, Faisal Azhar, Hugo Touvron, Louis Martin, Nicolas Usunier, Thomas Scialom, and Gabriel Synnaeve.
\newblock Code llama: Open foundation models for code, 2024.
\newblock URL \url{https://arxiv.org/abs/2308.12950}.

\bibitem[Shinn et~al.(2023)Shinn, Cassano, Berman, Gopinath, Narasimhan, and Yao]{shinn2023reflexion}
Noah Shinn, Federico Cassano, Edward Berman, Ashwin Gopinath, Karthik Narasimhan, and Shunyu Yao.
\newblock Reflexion: Language agents with verbal reinforcement learning, 2023.

\bibitem[Shridhar et~al.(2021)Shridhar, Yuan, C\^ot\'e, Bisk, Trischler, and Hausknecht]{ALFWorld20}
Mohit Shridhar, Xingdi Yuan, Marc-Alexandre C\^ot\'e, Yonatan Bisk, Adam Trischler, and Matthew Hausknecht.
\newblock {ALFWorld: Aligning Text and Embodied Environments for Interactive Learning}.
\newblock In \emph{Proceedings of the International Conference on Learning Representations (ICLR)}, 2021.
\newblock URL \url{https://arxiv.org/abs/2010.03768}.

\bibitem[Stephens et~al.(2012)Stephens, Cho, and Ballard]{stephens2012simultaneity}
Keri~K Stephens, Jaehee~K Cho, and Dawna~I Ballard.
\newblock Simultaneity, sequentiality, and speed: Organizational messages about multiple-task completion.
\newblock \emph{Human Communication Research}, 38\penalty0 (1):\penalty0 23--47, 2012.

\bibitem[Sutton \& Barto(2018)Sutton and Barto]{sutton2018reinforcement}
Richard~S. Sutton and Andrew~G. Barto.
\newblock \emph{Reinforcement Learning: An Introduction}.
\newblock MIT Press, Cambridge, MA, 2 edition, 2018.

\bibitem[Tian et~al.(2024)Tian, Peng, Song, Jin, Yu, Mi, and Yu]{tian2024selfimprovement}
Ye~Tian, Baolin Peng, Linfeng Song, Lifeng Jin, Dian Yu, Haitao Mi, and Dong Yu.
\newblock Toward self-improvement of llms via imagination, searching, and criticizing, 2024.

\bibitem[Touvron et~al.(2023)Touvron, Martin, Stone, Albert, Almahairi, Babaei, Bashlykov, Batra, Bhargava, Bhosale, Bikel, Blecher, Ferrer, Chen, Cucurull, Esiobu, Fernandes, Fu, Fu, Fuller, Gao, Goswami, Goyal, Hartshorn, Hosseini, Hou, Inan, Kardas, Kerkez, Khabsa, Kloumann, Korenev, Koura, Lachaux, Lavril, Lee, Liskovich, Lu, Mao, Martinet, Mihaylov, Mishra, Molybog, Nie, Poulton, Reizenstein, Rungta, Saladi, Schelten, Silva, Smith, Subramanian, Tan, Tang, Taylor, Williams, Kuan, Xu, Yan, Zarov, Zhang, Fan, Kambadur, Narang, Rodriguez, Stojnic, Edunov, and Scialom]{touvron2023llama}
Hugo Touvron, Louis Martin, Kevin Stone, Peter Albert, Amjad Almahairi, Yasmine Babaei, Nikolay Bashlykov, Soumya Batra, Prajjwal Bhargava, Shruti Bhosale, Dan Bikel, Lukas Blecher, Cristian~Canton Ferrer, Moya Chen, Guillem Cucurull, David Esiobu, Jude Fernandes, Jeremy Fu, Wenyin Fu, Brian Fuller, Cynthia Gao, Vedanuj Goswami, Naman Goyal, Anthony Hartshorn, Saghar Hosseini, Rui Hou, Hakan Inan, Marcin Kardas, Viktor Kerkez, Madian Khabsa, Isabel Kloumann, Artem Korenev, Punit~Singh Koura, Marie-Anne Lachaux, Thibaut Lavril, Jenya Lee, Diana Liskovich, Yinghai Lu, Yuning Mao, Xavier Martinet, Todor Mihaylov, Pushkar Mishra, Igor Molybog, Yixin Nie, Andrew Poulton, Jeremy Reizenstein, Rashi Rungta, Kalyan Saladi, Alan Schelten, Ruan Silva, Eric~Michael Smith, Ranjan Subramanian, Xiaoqing~Ellen Tan, Binh Tang, Ross Taylor, Adina Williams, Jian~Xiang Kuan, Puxin Xu, Zheng Yan, Iliyan Zarov, Yuchen Zhang, Angela Fan, Melanie Kambadur, Sharan Narang, Aurelien Rodriguez, Robert Stojnic, Sergey Edunov, and Thomas
  Scialom.
\newblock Llama 2: Open foundation and fine-tuned chat models, 2023.

\bibitem[Wada et~al.(2019)Wada, Iwata, and Matsumoto]{wada2019unsupervised}
Takashi Wada, Tomoharu Iwata, and Yuji Matsumoto.
\newblock Unsupervised multilingual word embedding with limited resources using neural language models.
\newblock In \emph{Proceedings of the 57th Annual Meeting of the Association for Computational Linguistics}, pp.\  3113--3124, 2019.

\bibitem[Yang et~al.(2023)Yang, Yue, and He]{yang2023autogptonlinedecisionmaking}
Hui Yang, Sifu Yue, and Yunzhong He.
\newblock Auto-gpt for online decision making: Benchmarks and additional opinions, 2023.
\newblock URL \url{https://arxiv.org/abs/2306.02224}.

\bibitem[Yang et~al.(2018{\natexlab{a}})Yang, Qi, Zhang, Bengio, Cohen, Salakhutdinov, and Manning]{yang-etal-2018-hotpotqa}
Zhilin Yang, Peng Qi, Saizheng Zhang, Yoshua Bengio, William Cohen, Ruslan Salakhutdinov, and Christopher~D. Manning.
\newblock {H}otpot{QA}: A dataset for diverse, explainable multi-hop question answering.
\newblock In Ellen Riloff, David Chiang, Julia Hockenmaier, and Jun{'}ichi Tsujii (eds.), \emph{Proceedings of the 2018 Conference on Empirical Methods in Natural Language Processing}, pp.\  2369--2380, Brussels, Belgium, October-November 2018{\natexlab{a}}. Association for Computational Linguistics.
\newblock \doi{10.18653/v1/D18-1259}.
\newblock URL \url{https://aclanthology.org/D18-1259}.

\bibitem[Yang et~al.(2018{\natexlab{b}})Yang, Qi, Zhang, Bengio, Cohen, Salakhutdinov, and Manning]{yang2018hotpotqadatasetdiverseexplainable}
Zhilin Yang, Peng Qi, Saizheng Zhang, Yoshua Bengio, William~W. Cohen, Ruslan Salakhutdinov, and Christopher~D. Manning.
\newblock Hotpotqa: A dataset for diverse, explainable multi-hop question answering, 2018{\natexlab{b}}.
\newblock URL \url{https://arxiv.org/abs/1809.09600}.

\bibitem[Yao et~al.(2023{\natexlab{a}})Yao, Chen, Yang, and Narasimhan]{yao2023webshop}
Shunyu Yao, Howard Chen, John Yang, and Karthik Narasimhan.
\newblock Webshop: Towards scalable real-world web interaction with grounded language agents, 2023{\natexlab{a}}.

\bibitem[Yao et~al.(2023{\natexlab{b}})Yao, Zhao, Yu, Du, Shafran, Narasimhan, and Cao]{yao2022react}
Shunyu Yao, Jeffrey Zhao, Dian Yu, Nan Du, Izhak Shafran, Karthik Narasimhan, and Yuan Cao.
\newblock React: Synergizing reasoning and acting in language models, 2023{\natexlab{b}}.
\newblock URL \url{https://arxiv.org/abs/2210.03629}.

\bibitem[Zhou et~al.(2024)Zhou, Zhang, Zhang, He, Wang, Shi, and Khamis]{zhou2024human}
Ziqi Zhou, Jingyue Zhang, Jingyuan Zhang, Yangfan He, Boyue Wang, Tianyu Shi, and Alaa Khamis.
\newblock Human-centric reward optimization for reinforcement learning-based automated driving using large language models.
\newblock \emph{arXiv preprint arXiv:2405.04135}, 2024.

\end{thebibliography}

\appendix

\begin{figure*}[!ht]
    \centering
    \includegraphics[width=\textwidth]{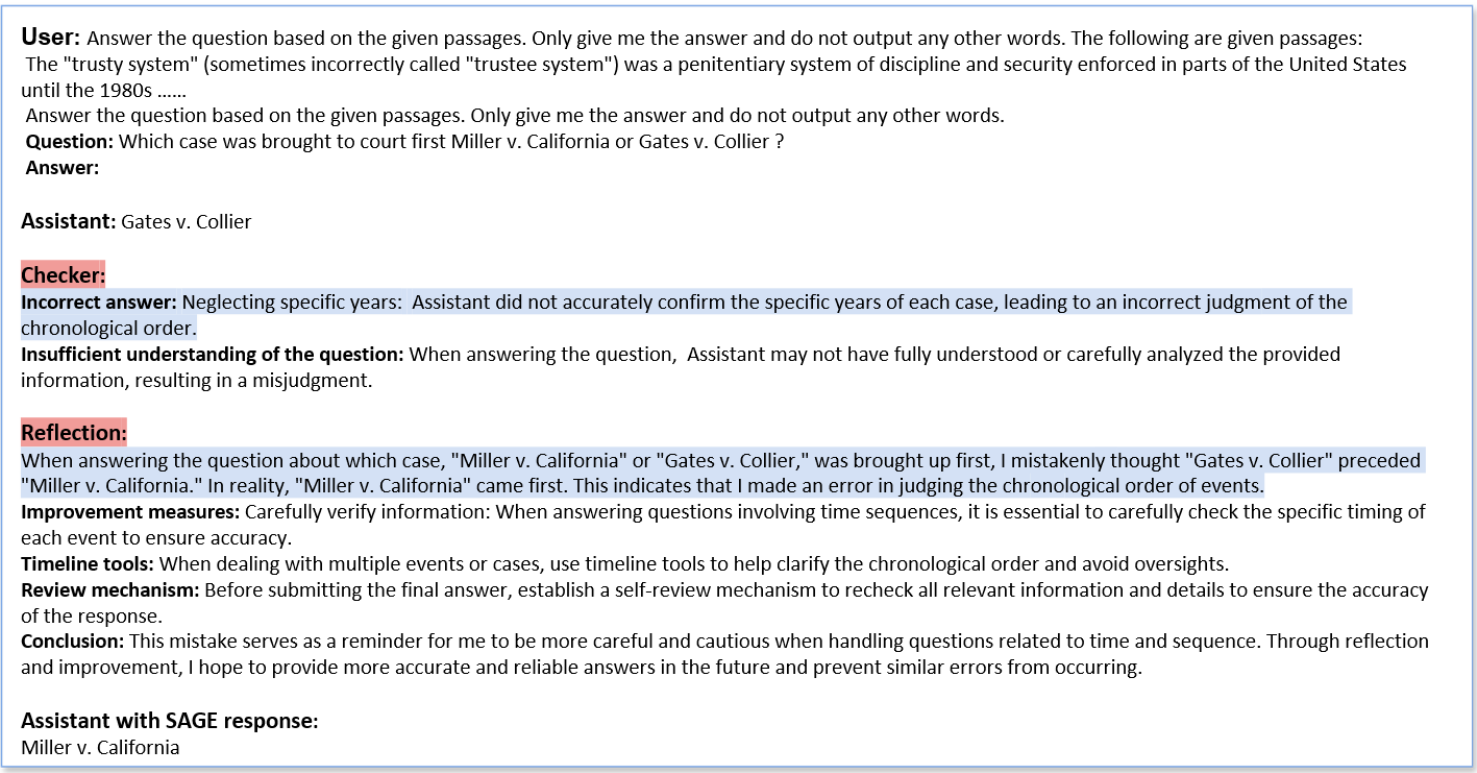}
    \caption{The illustration of an example HotpotQA with SAGE.}
    \label{fig:2}
\end{figure*}

\section{Theoretical Analysis}
\subsection{An Information-Theoretic Perspective on Memory Optimization Based on the Ebbinghaus Forgetting Curve}\label{sec:thm1}

The memory optimization mechanism proposed in this work is grounded in the Ebbinghaus forgetting curve. It dynamically manages short-term memory (STM) and long-term memory (LTM) by prioritizing the retention of essential information while discarding irrelevant data. This mechanism can be viewed as computing the retention strength \( S(I_t) \) of each information unit based on its information entropy. Theoretically, one can interpret this as maximizing information entropy to show that the mechanism effectively mitigates the issue of information overload~\cite{10.5555/1146355}. That is, by preferentially retaining high-entropy (i.e., high-uncertainty) information, the model improves its information processing efficiency.

First, the retention strength of information is defined as:
\begin{equation}
S(I_t) = \frac{H(I_t)}{f(t)}
\end{equation}

where the numerator \( H(I_t) \) represents the information entropy of the content at time \( t \), and \( f(t) \) models the forgetting curve.

The objective of memory optimization is to maximize the utility of retained information under a memory capacity constraint:

\begin{equation}
\max \sum_t M_t \cdot S(I_t), \quad \text{s.t.} \quad \sum_t M_t \leq C
\end{equation}

Here, \( C \) represents the total memory capacity, and \( M_t \in \{0,1\} \) denotes whether the information at time \( t \) is retained (1) or discarded (0).

Using the method of Lagrange multipliers, we construct the Lagrangian function:

\begin{equation}
\mathcal{L}(M_t, \lambda) = \sum_t M_t \cdot S(I_t) - \lambda \left( \sum_t M_t - C \right)
\end{equation}

Taking the derivative of \( \mathcal{L} \), which is the optimal memory retnetion, with respect to \( M_t \) and setting it to zero yields the optimality condition:

\begin{equation}
\frac{\partial \mathcal{L}}{\partial M_t} = S(I_t) - \lambda = 0 \quad \Rightarrow \quad
\begin{cases}
\text{Retain } I_t & \text{if } S(I_t) \geq \lambda \\
\text{Discard } I_t & \text{if } S(I_t) < \lambda
\end{cases}
\end{equation}

This rule is consistent with the threshold-based memory retention strategy used in the SAGE framework.

\subsection{Iterative Feedback and Game-Theoretic Model}\label{sec:thm2}
Multi-agent systems often depend on reinforcement learning (RL) methods, making their theoretical basis essential for ensuring that algorithms converge to stable solutions, such as Nash equilibria~\cite{fudenberg1991game}. The subsequent theory provides a formal guarantee of convergence, showing that our proposed algorithm is not randomly designed. Instead, it surpasses standard approaches by establishing robust criteria that confirm both its stability and performance advantages.

\subsubsection{Players and Their Strategy Spaces Settings}

\begin{equation}
\begin{array}{rcl}
\textbf{User (U)} &:& \text{Provides a task description } T_U \text{ and constraints } C_U. \\[1mm]
\textbf{Assistant (A)} &:& \text{Chooses actions } a_t \sim \pi_\theta(\cdot \mid s_t), \text{ where } \pi_\theta \text{ is the policy}. \\[1mm]
\textbf{Checker (C)} &:& \text{Generates feedback } f_t \text{ aimed at reducing the error } \|o_t - o^*\|_F,
\end{array}
\end{equation}
where $o_t$ is the output of the assistant and $o^*$ is the expected (ground-truth) output, $\|\cdot \|_F$ denotes the Frobenius norm.

For the Utility Functions, we have:
\begin{equation}
\begin{array}{rcl}
\textbf{Assistant’s Utility, } R_A &:& \text{Measures task performance (e.g., BLEU score, accuracy):} \\
&& R_A = \text{Task Performance Metric of } a_t. \\[1mm]
\textbf{Checker’s Utility, } R_C &:& \text{Measures feedback accuracy by reduction of error:} \\
&& R_C \propto \Delta \|o_t - o^*\|_F.
\end{array}
\end{equation}

The overall interaction can be viewed as a three-player game in which:
The assistant (A) optimizes its policy $ \pi_\theta$ to maximize $R_A$, while the checker (C) aims to minimize  $\|o_t - o^*\|$ (i.e., maximize  $R_C$) The user U sets the task and constraints that guide both the assistant and checker’s performances. Next we want to show that it can be shown that under a finite strategy space and continuous utility functions, a Nash equilibrium exists in this multi-agent game.

\subsubsection{Setup for the proof of theorem~\ref{thm:nash_stability}}

First, we want to write a road map for this proof.
\begin{itemize}[leftmargin=10pt]
    \item It can be shown that under a finite strategy space and continuous utility functions, a Nash equilibrium exists in this multi-agent game.
    \item According to the \textbf{Debreu-Glicksberg-Fan Theorem} (a generalization of Nash's existence theorem), if each player’s strategy space is a compact convex set and the utility functions are continuous and quasi-concave (or concave) in their own strategies, then at least one Nash equilibrium exists.
    \item Then, the assistant’s policy \( \pi_\theta \) is updated using policy gradient methods. Since \( R_A \) is a continuous function of \( \pi_\theta \), the assistant’s utility is continuous.  
    \item After that, the checker’s feedback strategy can be obtained via convex optimization (e.g., least-squares regression), and thus \( R_C \) is a convex function.  
    \item Therefore, a Nash equilibrium \( (\pi_\theta^*, f^*) \) exists, at which both the assistant and checker’s strategies are stable with respect to each other:
\begin{equation}
\pi_\theta^* = \arg\max_{\pi} R_A(\pi, f^*), \quad
f^* = \arg\max_f R_C(\pi_\theta^*, f)
\end{equation}
    
\end{itemize}

Let's start the proof.

\begin{lemma}\label{lemma:1}
Let $\mathcal{S}_1$, $\mathcal{S}_2, \dots, \mathcal{S}_N$ be finite strategy sets for each of  $N$ players, and let  $u_i: \mathcal{S}_1 \times \cdots \times \mathcal{S}_N \to \mathbb{R} $ be continuous for each  $i$. Then, there exists a Nash equilibrium in the multi-agent game.
\end{lemma}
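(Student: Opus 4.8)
The plan is to first observe what the lemma can and cannot assert: on finite pure-strategy sets a \emph{pure}-strategy equilibrium may fail to exist (matching pennies is the standard witness), and since every real-valued function on a finite set is continuous in the discrete topology the stated continuity hypothesis carries no weight by itself. The substantive claim is therefore the existence of a \emph{mixed}-strategy Nash equilibrium, i.e.\ Nash's classical theorem, which is exactly the finite-game specialization of the Debreu-Glicksberg-Fan result invoked for Theorem~\ref{thm:nash_stability}. Accordingly I would pass from pure to mixed strategies: for each player $i$ let $\Delta_i$ denote the probability simplex over $\mathcal{S}_i$, set $\Delta = \Delta_1 \times \cdots \times \Delta_N$, and extend each $u_i$ to the expected payoff $U_i(\sigma) = \sum_{s} u_i(s)\,\prod_{j} \sigma_{j,s_j}$. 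The two facts that drive everything are that $\Delta$ is compact and convex (a finite product of simplices) and that $U_i$ is multilinear in $\sigma$, hence continuous overall and affine --- in particular concave --- in player $i$'s own component $\sigma_i$.

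With this setup I would take the Brouwer route in the style of Nash. For each player $i$ and pure strategy $s \in \mathcal{S}_i$ define the gain $g_{i,s}(\sigma) = \max\{0,\; U_i(s,\sigma_{-i}) - U_i(\sigma)\}$ and the self-map $T:\Delta \to \Delta$ by $T(\sigma)_{i,s} = (\sigma_{i,s} + g_{i,s}(\sigma)) / (1 + \sum_{s'} g_{i,s'}(\sigma))$. Since the gains are continuous, $T$ is continuous and maps the compact convex set $\Delta$ into itself, so Brouwer's fixed-point theorem yields a fixed point $\sigma^*$. The closing step is to show that any fixed point is an equilibrium: writing $G_i = \sum_{s'} g_{i,s'}(\sigma^*)$, the fixed-point relation rearranges to $\sigma^*_{i,s} G_i = g_{i,s}(\sigma^*)$; because $U_i(\sigma^*)$ is the $\sigma^*_i$-weighted average of the pure payoffs $U_i(s,\sigma^*_{-i})$, some $s$ in the support of $\sigma^*_i$ has payoff at or below that average and hence zero gain, forcing $G_i = 0$ for every $i$. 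All gains then vanish, so no player profits from any pure deviation, and by affineness of $U_i$ in $\sigma_i$ no mixed deviation helps either; thus $\sigma^*$ is a Nash equilibrium.

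The main difficulty is conceptual rather than computational: it is the passage to mixed strategies, without which the lemma is simply false, and whose point is that randomization convexifies the strategy space and thereby restores the compactness and convexity required to apply a fixed-point theorem to finite data. The one genuinely technical verification is that a fixed point of $T$ forces all gains to zero (the averaging argument above); if one prefers the Kakutani route instead, the corresponding technical content is checking that the best-response correspondence is nonempty-valued, convex-valued, and upper hemicontinuous via Berge's maximum theorem. Either way, these become routine once the affine dependence of $U_i$ on $\sigma_i$ is in hand, and that is where the actual meaning of ``equilibrium'' is pinned down.
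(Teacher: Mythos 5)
Your proof is correct, but it follows a genuinely different route from the paper's. Both arguments agree on the essential first move --- passing to mixed strategies, since (as you rightly stress) a pure-strategy equilibrium need not exist and continuity on a finite set is vacuous; the paper's proof makes this same passage silently, defining the simplices $\Delta(\mathcal{S}_i)$ and the multilinear expected utilities $U_i$ exactly as you do. Where you diverge is in the fixed-point machinery: you use Nash's 1951 argument, building the continuous gain map $T(\sigma)_{i,s} = \bigl(\sigma_{i,s} + g_{i,s}(\sigma)\bigr)/\bigl(1 + \sum_{s'} g_{i,s'}(\sigma)\bigr)$ and applying \emph{Brouwer's} theorem, then closing with the support-averaging argument that forces all gains to vanish at a fixed point. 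The paper instead defines the best-response correspondence $BR_i(x_{-i})$, verifies it is nonempty-, convex-, compact-valued and upper hemicontinuous (using linearity of $U_i$ in player $i$'s own component), and applies \emph{Kakutani's} theorem --- precisely the alternative route you mention at the end. Your approach buys elementarity and self-containedness: it needs only a fixed point of a continuous function, and the one technical step (zero gains at a fixed point) is fully spelled out, whereas the paper asserts upper hemicontinuity of $BR$ without verification. The paper's approach buys uniformity with the rest of its appendix: the same Kakutani/best-response template is reused verbatim in its Lemma on compact convex strategy sets (the Debreu--Glicksberg--Fan generalization) and in the formal proof of Theorem 3.1, so the correspondence-based argument scales directly to the infinite strategy spaces the paper ultimately needs, while Nash's gain-map construction does not generalize as cleanly beyond the finite case.
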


\begin{proof}

For each player $ i $, define the set of mixed strategies as
\begin{equation}
\Delta(\mathcal{S}_i)=\left\{ x_i\in \mathbb{R}^{|\mathcal{S}_i|} \ \Big|\ x_i(s)\ge0 \text{ for all } s\in \mathcal{S}_i, \quad \sum_{s\in \mathcal{S}_i} x_i(s)=1 \right\}.
\end{equation}
Each set $ \Delta(\mathcal{S}_i) $ is a simplex in $ \mathbb{R}^{|\mathcal{S}_i|} $, and hence is nonempty, compact, and convex. Define the expected utility function for player $ i $:
\begin{equation}
U_i(x_1,\dots,x_N)= \sum_{(s_1,\dots,s_N) \in \mathcal{S}_1 \times \cdots \times \mathcal{S}_N} \left( \prod_{j=1}^N x_j(s_j) \right) u_i(s_1,\dots,s_N).
\end{equation}
Since the strategy spaces are finite and $ u_i $ is continuous (and hence bounded on the finite set), the function $ U_i $ is continuous on the product space $\prod_{j=1}^N \Delta(\mathcal{S}_j)$.

For each player $ i $, define the best response correspondence as:
\begin{equation}
BR_i(x_{-i})=\arg\max_{x_i\in \Delta(\mathcal{S}_i)} U_i(x_i,x_{-i}),
\end{equation}
where $ x_{-i} $ denotes the mixed strategies of all players other than $ i $. Because $ \Delta(\mathcal{S}_i) $ is compact and $ U_i $ is continuous and linear in $ x_i $ (hence concave), the maximum exists and $ BR_i(x_{-i}) $ is nonempty, convex, and compact. Now we define the overall best-response correspondence:
\begin{equation}
BR(x)= BR_1(x_{-1}) \times BR_2(x_{-2}) \times \cdots \times BR_N(x_{-N}),
\end{equation}
where $ x=(x_1,x_2,\dots,x_N) $. The product space $\prod_{i=1}^N \Delta(\mathcal{S}_i)$ is compact and convex, and the correspondence $ BR $ has nonempty, convex, and compact values and is upper hemicontinuous (due to the continuity and linearity of the expected utilities). Hence, by Kakutani's fixed point theorem, there exists a fixed point 
\begin{equation}
x^*=(x_1^*, x_2^*, \dots, x_N^*)\in \prod_{i=1}^N \Delta(\mathcal{S}_i)
\end{equation}
such that
\begin{equation}
x^* \in BR(x^*).
\end{equation}
This fixed point is a Nash equilibrium in mixed strategies.

Thus, under a finite strategy space and continuous utility functions, a Nash equilibrium exists. According to the \textbf{Debreu-Glicksberg-Fan Theorem} (a generalization of Nash's existence theorem), if each player’s strategy space is a compact convex set and the utility functions are continuous and quasi-concave (or concave) in their own strategies, then at least one Nash equilibrium exists.

\end{proof}

\begin{lemma}\label{lemma:2}
By Lemma~\ref{lemma:1}, Let $I = \{1,2,\dots,n\}$ be the set of players. For each player $i$, let  $X_i \subset \mathbb{R}^{d_i}$ be a compact, convex strategy set, and define  $X = \prod_{i=1}^n X_i$. Suppose each player’s utility function $u_i: X \to \mathbb{R}$ is continuous on $X$ and quasi-concave (or concave) in the coordinate  $x_i$ for fixed $x_{-i}$. Then, by the Debreu-Glicksberg-Fan theorem, there exists at least one Nash equilibrium $x^* = (x_1^*, x_2^*, \dots, x_n^*) \in X$.
\end{lemma}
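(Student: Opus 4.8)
The plan is to prove existence of a Nash equilibrium by applying Kakutani's fixed-point theorem to the joint best-response correspondence, exactly paralleling the structure used in Lemma~\ref{lemma:1}, except that here the strategy sets $X_i$ are already compact and convex, so no passage through mixed-strategy simplices is required. The entire argument reduces to verifying that the best-response map satisfies the hypotheses of Kakutani's theorem and then reading off the fixed point as an equilibrium.

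First I would define, for each player $i$, the best-response correspondence
\begin{equation}
BR_i(x_{-i}) = \arg\max_{x_i \in X_i} u_i(x_i, x_{-i}),
\end{equation}
and check the properties Kakutani demands. Nonemptiness follows from the Weierstrass extreme-value theorem: $u_i(\cdot, x_{-i})$ is continuous and $X_i$ is compact, so the maximum is attained. Convexity of $BR_i(x_{-i})$ follows from quasi-concavity of $u_i(\cdot, x_{-i})$, since the set of maximizers is the intersection of $X_i$ with the upper level set at the optimal value, and that level set is convex by quasi-concavity.

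The main obstacle, and the one place requiring more than elementary compactness and convexity, is establishing upper hemicontinuity of the map $x_{-i} \mapsto BR_i(x_{-i})$. I would obtain this from Berge's Maximum Theorem: since $u_i$ is jointly continuous and the constraint correspondence $x_{-i} \mapsto X_i$ is constant (hence trivially continuous and compact-valued), the argmax correspondence is upper hemicontinuous with compact values. Assembling the players, I would define the joint correspondence $BR(x) = \prod_{i=1}^n BR_i(x_{-i})$ as a map from the compact convex product $X = \prod_{i=1}^n X_i$ into its own subsets; it inherits nonempty, convex, compact values and upper hemicontinuity from its factors.

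Finally I would invoke Kakutani's fixed-point theorem to obtain a point $x^* \in BR(x^*)$, and then observe that this fixed point is precisely a Nash equilibrium: the membership $x_i^* \in BR_i(x_{-i}^*)$ for every $i$ is exactly the statement that $u_i(x_i^*, x_{-i}^*) \geq u_i(x_i, x_{-i}^*)$ for all $x_i \in X_i$, so no player can gain by a unilateral deviation. This recovers the Debreu-Glicksberg-Fan theorem as a direct consequence of Kakutani's theorem under the stated hypotheses, completing the argument.
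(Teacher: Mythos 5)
Your proposal is correct and follows essentially the same route as the paper's own proof: define the individual best-response correspondences, verify nonemptiness (compactness plus continuity), convexity (quasi-concavity), and upper hemicontinuity, then apply a fixed-point theorem to the product correspondence $BR(x)=\prod_{i=1}^n BR_i(x_{-i})$ and identify the fixed point as a Nash equilibrium. If anything, your version is tighter in two spots where the paper is loose: you justify upper hemicontinuity via Berge's Maximum Theorem rather than merely asserting it from continuity, and you invoke Kakutani's theorem directly instead of the paper's somewhat circular citation of the Debreu--Glicksberg--Fan theorem as the final step of a lemma whose content \emph{is} that theorem.
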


\begin{proof}
For each player $ i $, define the best-response correspondence
\begin{equation}
BR_i(x_{-i}) = \{ x_i \in X_i : u_i(x_i, x_{-i}) = \max_{y_i \in X_i} u_i(y_i, x_{-i}) \}.
\end{equation}
Since $ u_i $ is continuous and $ X_i $ is compact, the maximum is attained; hence, $ BR_i(x_{-i}) \neq \varnothing $. The quasi-concavity (or concavity) of $ u_i $ in $ x_i $ ensures that the set 
\begin{equation}
\{ x_i \in X_i : u_i(x_i, x_{-i}) \geq \alpha \}
\end{equation}
is convex for any $ \alpha \in \mathbb{R} $. In particular, the set $ BR_i(x_{-i}) $ is convex for every fixed $ x_{-i} $.

The continuity of $ u_i $ implies that the best-response correspondence $ BR_i $ is upper hemicontinuous. This means that for any sequence $ \{x_{-i}^{(k)}\} $ converging to $ x_{-i} $ and any corresponding sequence $ \{ x_i^{(k)} \} $ with $ x_i^{(k)} \in BR_i(x_{-i}^{(k)}) $, every limit point $ x_i $ of $ \{ x_i^{(k)} \} $ belongs to $ BR_i(x_{-i}) $.

Now we define the aggregate best-response correspondence:
\begin{equation}
BR(x) = BR_1(x_{-1}) \times BR_2(x_{-2}) \times \cdots \times BR_n(x_{-n}),
\end{equation}
where $ x = (x_1, x_2, \dots, x_n) \in X $. Note that: (1) $X$ is a nonempty, compact, convex subset of $ \mathbb{R}^{d_1 + d_2 + \dots + d_n} $. (2) For each $ i $, $ BR_i(x_{-i}) $ is nonempty and convex. (3) The correspondence $ BR $ is upper hemicontinuous.

By the Debreu-Glicksberg-Fan fixed point theorem, there exists a fixed point $ x^* \in X $ such that $x^* \in BR(x^*)$. That is, for every $ i \in I $,$x_i^* \in BR_i(x_{-i}^*)$. This means that no player can unilaterally deviate from $ x^* $ and increase their utility, which is precisely the definition of a Nash equilibrium.

\end{proof}

Next, based on the above Lemmas, we want to prove that the assistant’s policy \( \pi_\theta \) is updated using policy gradient methods~\cite{sutton2018reinforcement}. Since \( R_A \) is a continuous function of \( \pi_\theta \), the assistant’s utility is continuous.  

\begin{lemma}\label{lemma:3}
Let $\Theta$ denote the parameter space for the assistant's policy $\pi_{\theta}$, and let the assistant's utility be given by
\begin{equation}
R_A(\pi_\theta) = f\big(\pi_\theta\big),
\end{equation}
where $f: \Pi \to \mathbb{R}$ is a continuous function on the space $\Pi$ of policies. Assume that the mapping $\theta \mapsto \pi_\theta$ is continuous. Then the composite function
\begin{equation}
F(\theta) = f\big(\pi_\theta\big)
\end{equation}
is continuous in $\theta$. Moreover, since the assistant's policy is updated using policy gradient methods, the update rule
\begin{equation}
\theta_{k+1} = \theta_k + \alpha_k \nabla_{\theta}  F(\theta_k)
\end{equation}
employs the gradient of a continuous function.
\end{lemma}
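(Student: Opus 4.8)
The plan is to recognize that the first assertion is nothing more than the composition of two continuous maps, and to treat the closing clause about the gradient separately, since continuity alone does not license talking about $\nabla_\theta F$. First I would fix notation by writing $g : \Theta \to \Pi$ for the parametrization map $g(\theta) = \pi_\theta$, so that $F = f \circ g$. By hypothesis $g$ is continuous and $f : \Pi \to \mathbb{R}$ is continuous, where $\Pi$ is equipped with whatever topology makes the assumed continuity of $f$ meaningful.

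Second, the core step is the continuity of $F$, which I would establish in either of two equivalent ways. Topologically: for any open $V \subseteq \mathbb{R}$, the set $f^{-1}(V)$ is open in $\Pi$ by continuity of $f$, and $g^{-1}(f^{-1}(V))$ is open in $\Theta$ by continuity of $g$, so $F^{-1}(V) = g^{-1}(f^{-1}(V))$ is open and $F$ is continuous. Equivalently, by an $\varepsilon$–$\delta$ argument: fixing $\theta_0$ and $\varepsilon > 0$, continuity of $f$ at $\pi_{\theta_0}$ gives a neighborhood $W$ of $\pi_{\theta_0}$ on which $f$ stays within $\varepsilon$ of $f(\pi_{\theta_0})$, and continuity of $g$ at $\theta_0$ gives a $\delta$-ball mapped into $W$; composing yields $|F(\theta) - F(\theta_0)| < \varepsilon$ whenever $\|\theta - \theta_0\| < \delta$. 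This step is routine and requires no further machinery.

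The delicate part — and the place where I would be explicit about assumptions rather than grind through a calculation — is the final statement that the update $\theta_{k+1} = \theta_k + \alpha_k \nabla_\theta F(\theta_k)$ \emph{employs the gradient of a continuous function}. Continuity of $F$ by itself does not guarantee that $\nabla_\theta F$ even exists, so I would strengthen the regularity hypotheses to $C^1$: assume $f$ is continuously differentiable on $\Pi$ (in the Fréchet or Gâteaux sense if $\Pi$ is infinite-dimensional) and $g = (\theta \mapsto \pi_\theta)$ is continuously differentiable. The chain rule then gives $\nabla_\theta F(\theta) = (Dg(\theta))^{\top}\,\nabla f(\pi_\theta)$, and a second application of the composition-of-continuous-maps principle shows that $\nabla_\theta F$ is itself continuous in $\theta$. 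Under these added smoothness assumptions the policy-gradient iteration is well defined and its gradient field is continuous.

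I expect no genuine mathematical obstacle in the first two paragraphs; the only real subtlety is conceptual, namely flagging that the lemma as literally stated establishes continuity of $F$ but that the gradient clause implicitly presupposes differentiability. I would therefore present that clause as a remark holding under the $C^1$ hypotheses above, which is the honest reading consistent with the policy-gradient setting invoked in the statement.
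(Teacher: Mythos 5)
Your proposal is correct, and for the main claim it follows exactly the paper's own route: the paper's Step 1 is the same composition-of-continuous-maps argument you give, stated there in $\epsilon$--$\delta$ form rather than via preimages of open sets. Where you genuinely diverge is on the gradient clause, and your treatment is the more careful one. The paper's Step 2 simply asserts that since $F$ is continuous ``and under typical smoothness assumptions, differentiable with continuous gradient,'' the gradient $\nabla_{\theta} F$ is well-defined --- which is precisely the elision you flag: continuity of $F$ by itself gives no license to write $\nabla_{\theta} F$ at all. Your explicit strengthening to $C^1$ hypotheses on $f$ and on the parametrization $\theta \mapsto \pi_\theta$, together with the chain rule $\nabla_\theta F(\theta) = (Dg(\theta))^{\top}\nabla f(\pi_\theta)$ and a second composition argument for continuity of the gradient field, supplies the regularity that the paper only gestures at parenthetically. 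In short: same decomposition and same key step for continuity, but your version surfaces the differentiability presupposition as an explicit added hypothesis instead of smuggling it in, which is what the policy-gradient iteration actually requires to be well defined.
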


\begin{proof}

We divide the proof into two parts:

Step 1: Continuity of the Utility Function. Since the mapping $\theta \mapsto \pi_\theta$ is continuous and the function $f:\Pi \to \mathbb{R}$ is continuous by hypothesis, their composition $F(\theta) = f\big(\pi_\theta\big)$ is continuous in $\theta$. Formally, for any $\epsilon > 0$, there exists a $\delta > 0$ such that for all $\theta, \theta^\prime \in \Theta$ satisfying $\|\theta - \theta^\prime\| < \delta$,
\begin{equation}
|F(\theta) - F(\theta^\prime)| = \big|f\big(\pi_\theta\big) - f\big(\pi_{\theta^\prime}\big)\big| < \epsilon.
\end{equation}

Step 2: Policy Gradient Updates. In the policy gradient framework, the assistant updates its parameters according to the rule:
\begin{equation}
\theta_{k+1} = \theta_k + \alpha_k \nabla_{\theta}  F(\theta_k),
\end{equation}
where $\alpha_k > 0$ is a (possibly diminishing) step size and $\nabla_{\theta}  F(\theta_k)$ denotes the gradient of $F$ evaluated at $\theta_k$. Since $F$ is continuous (and under typical smoothness assumptions, differentiable with continuous gradient), the gradient $\nabla_{\theta}  F$ is well-defined. The policy gradient update is thus a continuous map from $\theta_k$ to $\theta_{k+1}$.

Because the update at each iteration is based on the gradient (which is continuous) of the assistant's utility $F(\theta)$, any small perturbation in $\theta$ results in only a small variation in both the gradient and the updated parameter. This demonstrates that the update procedure is stable in the sense that the assistant's utility, $R_A(\pi_\theta) = F(\theta)$, continuously varies under the sequence of policy gradient updates~\cite{sutton2018reinforcement}.

Thus, since $R_A$ is a continuous function of $ \pi_\theta$ and $\pi_\theta$ depends continuously on  $\theta$, the assistant's utility is continuous, and the policy gradient update is well-defined.
\end{proof}

Then we want to prove that, the checker’s feedback strategy can be obtained via convex optimization (e.g., least-squares regression), and thus \( R_C \) is a convex function.  

\begin{lemma}\label{lemma:4}
Let $\mathcal{F}$ be a convex subset of a vector space, and suppose the checker’s feedback strategy is given by a function $f\in\mathcal{F}$ obtained as a solution to the convex optimization problem
\begin{equation}
\min_{f\in\mathcal{F}} \quad J(f) = \sum_{i=1}^n \big(y_i - f(x_i)\big)^2,
\end{equation}
where $x_i$ are given inputs and $y_i$ are target outputs. Define the checker’s utility function
\begin{equation}
R_C(f) = J(f).
\end{equation}
Then $R_C$ is a convex function on $\mathcal{F}$; that is, for any $f_1, f_2 \in \mathcal{F}$ and any $\lambda \in [0,1]$ it holds that
\begin{equation}
R_C\big(\lambda f_1 + (1-\lambda) f_2\big) \le \lambda R_C(f_1) + (1-\lambda) R_C(f_2).
\end{equation}
\end{lemma}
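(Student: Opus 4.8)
The plan is to establish convexity of $R_C$ by reducing it to two elementary facts: that scalar squaring is convex, and that convexity is preserved under affine precomposition and finite summation. The crucial structural observation is that each data point contributes a term $(y_i - f(x_i))^2$ in which $f$ enters only through the \emph{evaluation functional} $L_i(f) = f(x_i)$, which is linear in $f$. This linearity is exactly what converts a statement about functions into a scalar convexity inequality.

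First I would fix $f_1, f_2 \in \mathcal{F}$ and $\lambda \in [0,1]$, and note that convexity of $\mathcal{F}$ guarantees $\lambda f_1 + (1-\lambda) f_2 \in \mathcal{F}$, so that $R_C$ is defined at this point and the inequality to be proved is meaningful. Next, because each evaluation functional is linear, I would use the identity $y_i = \lambda y_i + (1-\lambda) y_i$ to rewrite the residual as $y_i - (\lambda f_1 + (1-\lambda) f_2)(x_i) = \lambda\,(y_i - f_1(x_i)) + (1-\lambda)\,(y_i - f_2(x_i))$. Introducing the abbreviations $a_i = y_i - f_1(x_i)$ and $b_i = y_i - f_2(x_i)$, I would then apply the convexity of the scalar map $t \mapsto t^2$ (equivalently the algebraic identity $\lambda a_i^2 + (1-\lambda) b_i^2 - (\lambda a_i + (1-\lambda) b_i)^2 = \lambda(1-\lambda)(a_i - b_i)^2 \ge 0$) to obtain the pointwise bound $(\lambda a_i + (1-\lambda) b_i)^2 \le \lambda a_i^2 + (1-\lambda) b_i^2$ for each $i$. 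Summing over $i = 1, \dots, n$ and recognizing the right-hand sums as $\lambda R_C(f_1)$ and $(1-\lambda) R_C(f_2)$ yields precisely the claimed convexity inequality.

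I do not anticipate a genuine obstacle, as the result is standard; the only point requiring care is that the term-by-term rewriting rests essentially on the \emph{linearity} of evaluation, so the argument would fail if $f$ entered the residual nonlinearly. An equally short alternative would invoke the general principle that a convex function composed with an affine map is convex, apply it to each summand $(y_i - \cdot)^2 \circ L_i$, and conclude with the fact that a nonnegative sum of convex functions is convex. I would favor the direct computation above, since it is fully self-contained and does not rely on citing those composition lemmas.
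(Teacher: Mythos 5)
Your proof is correct and takes essentially the same approach as the paper's: both decompose $R_C$ into the per-datum terms $\ell_i(f) = \big(y_i - f(x_i)\big)^2$, exploit that the residual is affine in $f$ while scalar squaring is convex, and conclude by summing the resulting per-term inequalities. The only difference is presentational — the paper invokes the convex-composed-with-affine principle that you listed as your ``alternative,'' whereas you verify the same per-term inequality by explicit algebra via $\lambda a_i^2 + (1-\lambda) b_i^2 - \big(\lambda a_i + (1-\lambda) b_i\big)^2 = \lambda(1-\lambda)(a_i - b_i)^2 \ge 0$.
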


\begin{proof}
We prove the convexity of $R_C$ by verifying that the squared-error loss is a convex function and that convex combinations preserve convexity.

For each data point $i$, consider the function
\begin{equation}
\ell_i(f) = \big(y_i - f(x_i)\big)^2.
\end{equation}
Since for any real number $t$, the function $t^2$ is convex, and the mapping $f\mapsto y_i-f(x_i)$ is affine (and hence convex), the composition $f\mapsto \big(y_i-f(x_i)\big)^2$ is convex. In particular, for any $f_1, f_2\in\mathcal{F}$ and $\lambda\in[0,1]$,
\begin{equation}
\ell_i\big(\lambda f_1 + (1-\lambda) f_2\big) \le \lambda \ell_i(f_1) + (1-\lambda)\ell_i(f_2).
\end{equation}

Since the checker’s overall loss (and hence utility function)
\begin{equation}
R_C(f) = \sum_{i=1}^n \ell_i(f)
\end{equation}
is a finite sum of convex functions, it follows that $R_C$ is convex on $\mathcal{F}$; that is, for any $f_1, f_2 \in \mathcal{F}$ and $\lambda \in [0,1]$,
\begin{equation}
\begin{aligned}
R_C\big(\lambda f_1 + (1-\lambda) f_2\big) &= \sum_{i=1}^n \ell_i\big(\lambda f_1 + (1-\lambda) f_2\big) \\
&\le \lambda \sum_{i=1}^n \ell_i(f_1) + (1-\lambda)\sum_{i=1}^n \ell_i(f_2) \\
&= \lambda R_C(f_1) + (1-\lambda)R_C(f_2).
\end{aligned}
\end{equation}
Thus:
\begin{equation}
R_C(f)=\sum_{i=1}^n \big(y_i-f(x_i)\big)^2 \text{ is convex on } \mathcal{F}.
\end{equation}
\end{proof}

Thus, the checker’s feedback strategy derived via convex optimization (e.g., least-squares regression) ensures that the corresponding utility function $R_C$ is convex.

\subsubsection{The formal proof of theorem~\ref{thm:nash_stability}}
We want to rewrite the theorem~\ref{thm:nash_stability} into the following lemma based on the former lemmas:
\begin{lemma}[theorem~\ref{thm:nash_stability}]
Suppose the assistant’s strategy space  $\Pi$ and the checker’s strategy space $\mathcal{F}$ are nonempty, compact, and convex sets. Assume that: $R_A: \Pi \times \mathcal{F} \to \mathbb{R}$ is continuous in $\pi$ for fixed $f$, $R_C: \Pi \times \mathcal{F} \to \mathbb{R}$ is convex in $f$ for fixed $\pi$. Then there exists a Nash equilibrium $\pi_\theta^*, f^*$ defined by
\begin{equation}
\pi_\theta^* = \arg\max_{\pi\in\Pi} R_A(\pi, f^*) \quad \text{and} \quad f^* = \arg\max_{f\in\mathcal{F}} R_C(\pi_\theta^*, f).
\end{equation}
\end{lemma}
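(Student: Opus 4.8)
The plan is to treat the final statement as the two-player specialization of the general existence result already established in Lemma~\ref{lemma:2} — taking $n=2$ with $X_1 = \Pi$ and $X_2 = \mathcal{F}$ — so that the whole argument reduces to verifying that the two players' utilities satisfy the Debreu-Glicksberg-Fan hypotheses and then invoking the fixed-point conclusion verbatim. First I would record that $\Pi$ and $\mathcal{F}$ are nonempty, compact, and convex by assumption, so the product $\Pi \times \mathcal{F}$ inherits these properties and furnishes the domain on which the joint best-response correspondence will act.

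Next I would assemble the continuity and concavity inputs from the preceding lemmas. For the assistant, Lemma~\ref{lemma:3} gives that $R_A(\pi, f)$ is continuous in $\pi$ for fixed $f$, which, together with quasi-concavity of $R_A$ in its own argument, makes the best-response set $BR_A(f) = \arg\max_{\pi \in \Pi} R_A(\pi, f)$ nonempty, convex, and compact for each $f$. For the checker, Lemma~\ref{lemma:4} establishes that the least-squares objective is a finite convex function of $f$ (hence continuous on the convex domain), and I would read the checker as selecting $f$ to reduce this loss, so that its sublevel sets are convex and $BR_C(\pi)$ is likewise nonempty, convex, and compact. I would then define the aggregate correspondence $BR(\pi, f) = BR_A(f) \times BR_C(\pi)$ on $\Pi \times \mathcal{F}$.

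With these pieces in place, the closing step mirrors the proof of Lemma~\ref{lemma:2} exactly: $BR$ has nonempty, convex, compact values and is upper hemicontinuous (the latter from continuity of the utilities, via Berge's maximum theorem), so Kakutani's fixed-point theorem — equivalently the Debreu-Glicksberg-Fan theorem — yields a point $(\pi_\theta^*, f^*) \in BR(\pi_\theta^*, f^*)$. Unpacking the product membership gives $\pi_\theta^* \in BR_A(f^*)$ and $f^* \in BR_C(\pi_\theta^*)$, which are precisely the two $\arg\max$ conditions in the statement, identifying $(\pi_\theta^*, f^*)$ as a Nash equilibrium.

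The step I expect to be the main obstacle — and the one deserving explicit care — is reconciling the checker's role with the concavity requirement of the fixed-point theorem. Lemma~\ref{lemma:4} supplies a \emph{convex} utility, and maximizing a convex function over a convex set generically produces a corner-valued, nonconvex best-response set, which would destroy the convex-valuedness that Kakutani requires. The resolution I would make explicit is that the checker's operative objective is error \emph{reduction}, so the relevant optimization is a minimization of the convex loss (equivalently a concave maximization of its negative), which restores the convex-valued, upper hemicontinuous best-response correspondence. I would state this orientation convention carefully up front so that the quasi-concavity hypothesis inherited from Lemma~\ref{lemma:2} genuinely applies to both players, and so that the product correspondence $BR$ satisfies all the conditions needed for the fixed point to exist.
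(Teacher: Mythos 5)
Your proposal follows essentially the same route as the paper's own proof: define the product best-response correspondence $H(\pi,f)=BR_A(f)\times BR_C(\pi)$ on the compact convex domain $\Pi\times\mathcal{F}$, verify nonempty, convex, closed values and upper hemicontinuity, and invoke Kakutani's fixed-point theorem (equivalently Debreu--Glicksberg--Fan, as in Lemma~\ref{lemma:2}) to obtain a fixed point that unpacks into the two $\arg\max$ conditions. The notable difference is that your version is more careful than the paper's on exactly the two points where the paper's argument is defective. First, the paper asserts that convexity of $R_C$ in $f$ makes $BR_C(\pi)$ convex under \emph{maximization}; this is false in general, since maximizing a convex function over a compact convex set typically yields an extreme-point, nonconvex argmax set (e.g.\ $\arg\max_{x\in[-1,1]} x^2 = \{-1,1\}$). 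Your reorientation of the checker's objective as minimization of the convex loss --- equivalently, concave maximization of its negative --- is precisely the fix needed to restore convex-valuedness, and it is also the reading consistent with Lemma~\ref{lemma:4}, where $R_C$ is a least-squares loss the checker should want small. Second, the paper concludes that $H(\pi,f)$ is convex while only ever establishing convexity of the $BR_C$ factor; your explicit addition of quasi-concavity of $R_A$ in $\pi$ is genuinely needed for $BR_A(f)$ to be convex, since the stated hypotheses (continuity plus compactness) give only nonemptiness and closedness. Your appeal to Berge's maximum theorem for upper hemicontinuity likewise supplies a justification the paper merely asserts. In short, the proposal is correct, matches the paper's strategy, and quietly repairs the paper's own proof.
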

\begin{proof}

For each fixed $ f \in \mathcal{F} $ define the assistant’s best-response set:
\begin{equation}
BR_A(f) = \{ \pi \in \Pi : R_A(\pi, f) = \max_{\pi' \in \Pi} R_A(\pi', f) \}.
\end{equation}
By hypothesis (i) and the compactness of $\Pi$, the maximum is attained and $ BR_A(f) $ is nonempty. (Continuity of $R_A$ assures that small changes in $f$ lead to small changes in $R_A(\cdot,f)$.)

Similarly, for each fixed $\pi \in \Pi$ define the checker’s best-response set:
\begin{equation}
BR_C(\pi) = \{ f \in \mathcal{F} : R_C(\pi, f) = \max_{f' \in \mathcal{F}} R_C(\pi, f') \}.
\end{equation}
Since hypothesis (ii) implies that $R_C$ is convex in $f$ and $\mathcal{F}$ is convex and compact, we conclude that $ BR_C(\pi) $ is nonempty, convex, and closed.

Consider the correspondence
\begin{equation}
H: \Pi \times \mathcal{F} \to \Pi \times \mathcal{F}, \quad H(\pi,f) = BR_A(f) \times BR_C(\pi).
\end{equation}
By the arguments above: For every $ (\pi,f) \in \Pi \times \mathcal{F} $, the set $ H(\pi,f) $ is nonempty. $ BR_A(f) $ is closed (by continuity and compactness) and $ BR_C(\pi) $ is convex (and closed) by hypothesis (ii). Hence, $ H(\pi,f) $ is convex and closed.

Since the domain $ \Pi \times \mathcal{F} $ is nonempty, compact, and convex, and since the correspondence $ H $ has nonempty, convex, and closed values and is upper hemicontinuous (this follows from the continuity of $ R_A $ in $\pi$ and convexity of $ R_C $ in $ f $), Kakutani’s fixed point theorem guarantees the existence of a fixed point
\begin{equation}
(\pi^*, f^*) \in \Pi \times \mathcal{F}\quad\text{such that}\quad (\pi^*, f^*) \in H(\pi^*, f^*).
\end{equation}

By the definition of $ H $, the fixed point satisfies
\begin{equation}
\pi^* \in BR_A(f^*) \quad \text{and} \quad f^* \in BR_C(\pi^*),
\end{equation}
which is equivalent to
\begin{equation}
\pi^* = \arg\max_{\pi\in\Pi} R_A(\pi, f^*) \quad \text{and} \quad f^* = \arg\max_{f\in\mathcal{F}} R_C(\pi^*, f).
\end{equation}

Thus, no unilateral deviation by either the assistant or the checker increases the respective utility; in other words, the strategy pair $ (\pi^*, f^*) $ is a Nash equilibrium.
\end{proof}

\section{Detailed Dataset Information}
\label{sec:appendixA}
\begin{itemize}
    \item[\textbf{(1)}] \textbf{Operating systems (OS)}
    Integrating LLMs into operating systems has immense potential for automating and optimizing tasks. This integration requires a secure, user-friendly interface to ensure effective LLM-OS interaction. LLMs must accurately understand the OS context for informed operations, prioritizing safety to prevent misuse. Additionally, the system should effectively handle errors and provide clear feedback to users, enhancing overall interaction and control. Addressing these aspects can transform computer interaction and efficiency across various industries.
    
    \item[\textbf{(2)}] \textbf{Database (DB)}
The ability of LLMs to operate on real databases via SQL is critical due to the importance and complexity of database analysis in everyday activities. Previous research has highlighted the effectiveness of LLMs in automating database access, such as with T5QL, a new SQL generation method. Furthermore, fine-tuned LLMs (like GPT-3.5) have demonstrated the ability to extract and link complex scientific information from texts, obtaining structured knowledge from unstructured text and building extensive databases.
    
    \item[\textbf{(3)}] \textbf{WebShop (WS)}
WebShop is an innovative simulation of an e-commerce website environment, featuring 1.18 million real-world products and 12,087 crowd-sourced text instructions. It challenges agents to navigate various types of webpages and perform diverse actions to find, customize, and purchase products based on given instructions. WebShop's challenges include understanding compositional instructions, query (re-)formulation, dealing with noisy text on webpages, and strategic exploration.
    \item[\textbf{(4)}] \textbf{Knowledge Graphs (KG)}
LLMs' utilization in constructing and interacting with knowledge graphs (KG) presents a promising opportunity to enhance semantic understanding and information retrieval. This involves assessing the models' ability to generate and interpret complex interrelations within data, facilitating more intuitive and context-aware responses. Effective LLM performance in this domain could significantly improve AI's reasoning and decision-making capabilities based on structured knowledge.
    \item[\textbf{(5)}] \textbf{Mind2Web (M2W)}
Mind2Web (M2W) is a dataset designed for developing web agents capable of performing complex tasks on real websites via language instructions. It features over 2,000 tasks across 137 sites from 31 domains. M2W's real web environments and diverse user interactions make it a crucial platform for advancing AI navigation capabilities.

    \item[\textbf{(6)}] \textbf{ALFWorld (ALF)}
ALFWorld bridges interactive TextWorld environments with embodied tasks from the ALFRED dataset, enabling agents to learn abstract strategies and apply them to real-world tasks. It supports abstract reasoning and concrete execution, allowing agents to plan actions in a text-based simulator and execute these tasks in a visual environment. This approach enhances agent generalization and problem-solving skills across various domains, such as language understanding and visual navigation, by leveraging a modular design that simplifies research improvements.
\end{itemize}

\section{Comparison with Task-Splitting Agents}
To compare the SAGE framework with existing agents across various task types and datasets, we conducted a comprehensive evaluation. The results are detailed in Table \ref{tab:task_splitting_evaluation}. SAGE demonstrates superior performance in task completion time, answer accuracy, dialog coherence, and step completion accuracy, highlighting its effectiveness in managing complex, multi-faceted tasks.

\begin{table*}[!ht]
    \centering
    \caption{Evaluation of Task-Splitting SOTA Agents on Different Task Types and Datasets}
    \label{tab:task_splitting_evaluation}
    \renewcommand{\arraystretch}{1.3}
    \setlength{\tabcolsep}{4pt} 
    \small 
    \begin{tabularx}{\textwidth}{p{3.5cm}cccc} 
        \toprule
        \multirow{2}{*}{\textbf{Agent}} & \textbf{Task Completion} & \textbf{Answer Accuracy} & \textbf{Dialog Coherence} & \textbf{Step Completion} \\
        & \textbf{Time (min)} & \textbf{(QA)(\%)} & \textbf{(\%)} & \textbf{Accuracy (\%)} \\
        \midrule
        \multicolumn{5}{c}{\textbf{Task 1: Long-form Question Answering (HotpotQA)}} \\ 
        \midrule
        BabyAGI       & 7.8  & 74.1 & -- & -- \\
        AutoGPT        & 8.1  & 76.3 & -- & -- \\
        HuggingGPT     & 6.7  & 79.2 & -- & -- \\
        LangChain Agent& 7.9  & 77.0 & -- & -- \\
        TaskMatrix.AI  & 6.5  & 80.3 & -- & -- \\
        \rowcolor{rowgray!20}
        \textbf{SAGE - ChatGPT-4}  & 10.6 & \textbf{\textcolor{bestgreen}{82.5 (+6.0)}} & -- & -- \\ 
        \midrule
        \multicolumn{5}{c}{\textbf{Task 2: Multi-turn Dialog Completion (MultiWOZ)}} \\
        \midrule
        BabyAGI        & 8.2  & -- & 73.5 & -- \\
        AutoGPT        & 8.5  & -- & 75.6 & -- \\
        HuggingGPT     & 7.1  & -- & 78.1 & -- \\
        LangChain Agent& 8.3  & -- & 76.5 & -- \\
        TaskMatrix.AI  & 6.8  & -- & 79.9 & -- \\
        \rowcolor{rowgray!20}
        \textbf{SAGE - ChatGPT-4}  & 10.9 & -- & \textbf{\textcolor{bestgreen}{81.6 (+5.8)}} & -- \\ 
        \midrule
        \multicolumn{5}{c}{\textbf{Task 3: Sequential Task with Dependencies (ALFWorld)}} \\
        \midrule
        BabyAGI        & 8.7  & -- & -- & 71.5 \\
        AutoGPT        & 8.3  & -- & -- & 73.1 \\
        HuggingGPT     & 7.5  & -- & -- & 75.2 \\
        LangChain Agent& 7.8  & -- & -- & 72.8 \\
        TaskMatrix.AI  & 6.6  & -- & -- & 76.4 \\
        \rowcolor{rowgray!20}
        \textbf{SAGE - ChatGPT-4}  & 10.7 & -- & -- & \textbf{\textcolor{bestgreen}{82.3 (+5.9)}} \\ 
        \bottomrule
    \end{tabularx}
\end{table*}

% \section{Distribution of various execution results across six tasks}\label{sec:appendix}

% \begin{table*}
% \centering
% \caption{AgentBench}
% \label{tab:5}
% \begin{tabular}{lcccccccc}
% \toprule
%  & OS & DB & KG  & ALF & WS & W2M \\
% \midrule
% Completed &75.0  &37.9  &30.1  &13.1  & 54.9 &56.6  \\
% CLE  &0.1  &0.7  &2.0  &0.7  &0.0  &0.0  \\
% Invalid Format &0.0  &53.3  &0.0  &0.0 &17.2  &0.0  \\
% Invalid Action &0.9  &0.0  &0.0  &64.1  &0.0  &8.4  \\
% TLE  &23.9  & 8.0 &67.9  &22.1  &27.8  & 35.0 \\
% \bottomrule
% \end{tabular}
% \end{table*}

% \begin{table*}
% \centering
% \caption{SAGE}
% \label{tab:6}
% \begin{tabular}{lcccccccc}
% \toprule
%  & OS & DB & KG  & ALF & WS & W2M \\
% \midrule
% Completed &84.7  &84.0  &25.0  &2.0  & 93.5 &57.0  \\
% CLE  &0.0  &0.0  &0.0  &0.0  &0.0  &0.0  \\
% Invalid Format &0.0  &3.0  &0.0  &0.0 &0.0  &0.0  \\
% Invalid Action &0.0  &0.0  &0.0  &96.0  &0.0  &8.0  \\
% TLE  &15.3  & 13.0 &75.0  &2.0  &6.5  & 35.0 \\
% \bottomrule
% \end{tabular}
% \end{table*} 

\section{Examples}\label{sec:appendixC}
\begin{mybox1}
\textbf{Cases study on TriviaQA in using GPT with SAGE.}
\par
\medskip
    \textbf{User:} \\
 Answer the question based on the given passages. Only give me the answer and do not output any other words.The following are given passages: \\
 History of canals in Great Britain - London Canal Museum
History of canals in Great Britain
Home > History > Canals in Britain
In the history of canals, Britain was not a pioneer. The Chinese can claim that the Grand Canal of China was one of the first, in the tenth century,a…… \\
Answer the question based on the given passages. Only give me the answer and do not output any other words.\\
Question: Which metal is produced by the Bessemer Process? \\
Answer:  

    \vspace{10pt}
    \textbf{Assistant:} \\
    The content of the file seems to be an HTML document related to the administrative division of Turkey, including population statistics, charts, and maps. This file contains scripts, metadata, and structured data typical for a web page.

\vspace{10pt}
    \textbf{Checker:} \\
    Clear Objectives: When answering user questions, the focus should be on directly addressing the question without introducing irrelevant information.
Document Handling: Before confirming whether the document content is relevant to the question, detailed aspects of the document should not be directly addressed.

\vspace{10pt}
    \textbf{Reflection:} \\
    Issue Description:
The user wants to understand why the system outputs irrelevant HTML document content based on user input and seeks detailed feedback and reflection.
Reasons for the Error:
Misinterpretation of Input: The system mistakenly assumed that the user uploaded a file and attempted to read its content, whereas the user only provided text input.
Processing Logic Error: Without clearly determining the user's intent, the system directly parsed and outputted file content.
User Experience Problems:
Irrelevant Output: The user expects concise, relevant answers, but the system provided lengthy and irrelevant HTML content, affecting the user experience.
Poor Communication: Failure to accurately understand and respond to the user's real needs resulted in off-topic responses
In-depth Reflection and Improvement Suggestions:
Clarify User Intent:
Accurate Parsing of User Input: Enhance the system's natural language processing capabilities to accurately distinguish whether the user uploaded a file or simply provided text input.
User Feedback Mechanism: Implement a user feedback mechanism to promptly adjust the understanding of user intent. For example, confirm the issue to verify if the user uploaded a file.
Optimize Processing Logic:
Layered Processing Flow: In system design, separate file processing logic from text response logic to avoid misjudgments. For instance, when a user uploads a file, process the file content separately; when a user inputs text, provide relevant answers directly.
Pre-processing Check: Prior to handling files, perform pre-processing checks to confirm if the file content meets expectations. 
……\\
    \vspace{10pt}
    \textbf{LLMs with SAGE response:} 
    Steel.

    \vspace{10pt}
    \textbf{Ground truth} \\
    Steel.

\medskip

\end{mybox1}

\clearpage
\begin{mybox1}
\textbf{Cases study on LCC in using GPT with SAGE.}
\par
\medskip
    \textbf{User:} \\
import networkx as nx 
import pickle
from Queue import PriorityQueue 
import copy
import random
import string
import sys
class MultiDiGraphEdgeKey(nx.MultiDiGraph):
……
    
    \vspace{10pt}
    \textbf{Assistant:} \\
   You will be given the history of a past experience where you were given a task and failed to complete it. Reflect on the strategy and actions taken. Devise a concise, new plan of action that accounts for your previous mistakes. Here are two examples: Example 1: If you failed to remember to lock the door, plan to double-check the door each time. Example2: If you missed a deadline, plan to set reminders earlier……

    \vspace{10pt}
    \textbf{Checker:} \\

The given code is a part of the Camerini algorithm, which is used for ranking edges in a directed graph. This algorithm is particularly useful for problems involving the identification of optimal or near-optimal substructures, like arborescences or spanning trees, in complex networks. The algorithm has two main phases:

Collapsing Phase: This phase identifies and collapses cycles in the graph, creating supernodes. This simplification helps in managing cycles more efficiently and prepares the graph for the expansion phase.

Expanding Phase: After collapsing cycles into supernodes, this phase expands the supernodes back into their original structure, resulting in a ranked arborescence, where the edges are sorted or prioritized based on certain criteria.
……

        \vspace{10pt}
    \textbf{LLMs with SAGE response:} \\
            Z.addedge(eu, ev, **data.copy())
            self. init(graph, Y, Z)
            A = best(root)
            self.init(graph)
            e , d  = self.next(A, Y, Z)
            w = self.getgraphscore(A) - d if d != float('inf') else float('inf')
            P.put((-w, e, A, Y, Z))
            solutions.append(A)
        return solutions

    def bestbranching(self, root):
        return self.best(root)

    def bestarborescencenorooted(self, root):
        return self.best(root)

    def bestarborescencerooted(self, root):
        return self.best(root)

    def getgraphscore(self, graph):
        return sum(data[self.attr] for u, v, data in graph.edges(data=True))

    def findroots(self, graph):
        roots = [node for node, degree in graph.indegree() if degree == 0]
        return roots
        ……

\medskip

\end{mybox1}

\clearpage
\begin{mybox1}
\textbf{Cases study on Repobench-P in using GPT with SAGE.}
\par
\medskip
    \textbf{User :} \\
\begin{verbatim}
Please complete the code given below. 
{src//kademlia//operation//KadRefreshOperation.java
public class KadRefreshOperation implements Operation
private final KadServer server;
private final KademliaNode localNode;
private final KademliaDHT dht;
private final KadConfiguration config;
public KadRefreshOperation......

\end{verbatim}
    
    \vspace{10pt}
    \textbf{Assistant:} \\
To devise a new plan of action, let's analyze the previous mistakes and come up with a solution:
Mistake 1: In the KadRefreshOperation class, the execute() method calls the BucketRefreshOperation and ContentRefreshOperation without any error handling or exception handling. This can lead to unexpected errors and failures.
Solution: Add proper error handling and exception handling in the execute() method of the KadRefreshOperation class. This will ensure that any errors or exceptions are properly handled and the execution can continue smoothly……

  \vspace{10pt}
    \textbf{Checker:} \\
In the `KadRefreshOperation` class, there is no error handling when the `execute()` method calls `BucketRefreshOperation` and `ContentRefreshOperation`.
Analysis:
If `BucketRefreshOperation` or `ContentRefreshOperation` throws an exception, the entire `execute()` method will fail, potentially causing the interruption of the entire system's refresh operation.
In the `StoreOperation` class, the `execute()` method does not handle the case of existing content in the DHT.
Analysis:
The `execute()` method always returns 1 as the number of storage nodes, which is clearly incorrect. The correct number of nodes should be returned based on whether the content already exists in the DHT.
Solution:
Before storing the content, check if it already exists. If it does, return 0; otherwise, store the content and return the actual number of storage nodes.

……

    \vspace{10pt}
    \textbf{LLMs with SAGE response:} \\
 private final transient KademliaRoutingTable routingTable;
    private final transient KadConfiguration config;
    private final transient MessageFactory messageFactory;

    /* Scheduled refresh operation */
    private final Timer refreshTimer;

    /**
     * Initialize the Kademlia system
     *
     * @param ownerId     The ID for this owner
     * @param nodeId      The ID for this node
     * @param inetAddress The address for this node
     * @param port        The port for this node
     * @param config      Configuration for the Kad system
     *
     * @throws java.io.IOException
     */
    public JKademliaNode(String ownerId, KademliaId nodeId, InetAddress inetAddress, int port, KadConfiguration config) throws IOException
    \{
        this.ownerId = ownerId;
        this.config = config;

        /* Initialize the Node */
        this.localNode = new Node(nodeId, inetAddress, port);

        /* Initialize the Server */
        this.server = new KadServer(localNode.getNodeId(), config);

    ……

\medskip

\end{mybox1}

% \clearpage
% \section{Distribution of various execution results across six tasks}

% \begin{table*}
% \centering
% \caption{AgentBench}
% \label{tab:5}
% \begin{tabular}{lcccccccc}
% \toprule
%  & OS & DB & KG  & ALF & WS & W2M \\
% \midrule
% Completed &75.0  &37.9  &30.1  &13.1  & 54.9 &56.6  \\
% CLE  &0.1  &0.7  &2.0  &0.7  &0.0  &0.0  \\
% Invalid Format &0.0  &53.3  &0.0  &0.0 &17.2  &0.0  \\
% Invalid Action &0.9  &0.0  &0.0  &64.1  &0.0  &8.4  \\
% TLE  &23.9  & 8.0 &67.9  &22.1  &27.8  & 35.0 \\
% \bottomrule
% \end{tabular}
% \end{table*}

% \begin{table*}
% \centering
% \caption{SAGE}
% \label{tab:6}
% \begin{tabular}{lcccccccc}
% \toprule
%  & OS & DB & KG  & ALF & WS & W2M \\
% \midrule
% Completed &84.7  &84.0  &25.0  &2.0  & 93.5 &57.0  \\
% CLE  &0.0  &0.0  &0.0  &0.0  &0.0  &0.0  \\
% Invalid Format &0.0  &3.0  &0.0  &0.0 &0.0  &0.0  \\
% Invalid Action &0.0  &0.0  &0.0  &96.0  &0.0  &8.0  \\
% TLE  &15.3  & 13.0 &75.0  &2.0  &6.5  & 35.0 \\
% \bottomrule
% \end{tabular}
% \end{table*} 
% \label{sec:appendix}

\end{document}